
\documentclass[10pt,journal,compsoc]{IEEEtran}
%


%

\usepackage{algorithm}
\usepackage{algpseudocode}

\usepackage{amsmath, amssymb, amsthm}
\interdisplaylinepenalty=2500

\usepackage{graphicx}

\newtheorem{theorem}{Theorem}
\newtheorem{definition} {Definition}

\DeclareMathOperator*{\argmin}{arg\,min}
\DeclareMathOperator*{\argmax}{arg\,max}
\DeclareMathOperator{\tr}{tr}

\newcommand{\vect}[1]{\mathbf{#1}}
\newcommand{\machine}[1]{$\mathcal{M}_{#1}$}
\newcommand{\real}{\mathbb{R}}
\newcommand{\order}{$\mathcal{O}$}
\newcommand{\norm}{\mathcal{N}}
\newcommand{\half}{\frac{1}{2}}
\newcommand{\set}[1]{\mathcal{#1}}
\newcommand{\prob}{\mathbb{P}}
\newcommand{\expect}{\mathbb{E}}
\newcommand{\trace}[1]{\tr\left(#1\right)}

%
\ifCLASSOPTIONcompsoc
  \usepackage[nocompress]{cite}
\else
  \usepackage{cite}
\fi
\ifCLASSOPTIONcompsoc
  \usepackage[caption=false,font=footnotesize,labelfont=sf,textfont=sf]{subfig}
\else
  \usepackage[caption=false,font=footnotesize]{subfig}
\fi

\hyphenation{op-tical net-works semi-conduc-tor}

\begin{document}
%
\title{Learning of Gaussian Processes in Distributed and Communication Limited Systems}
%
%
%
%

\author{Mostafa~Tavassolipour,~Seyed~Abolfazl~Motahari, Mohammad~Taghi~Manzuri~Shalmani
\IEEEcompsocitemizethanks{\IEEEcompsocthanksitem M. Tavassolipour, S. A. Motahari, and M. T. Manzuri Shalmani are with the Department
of Computer Engineering, Sharif University of Technology, Tehran, Iran.\protect\\
}
}

%
%

\markboth{IEEE Transaction on Pattern Analysis and Machine Intelligence}
{Shell \MakeLowercase{\textit{et al.}}: Bare Demo of IEEEtran.cls for Computer Society Journals}
%



\IEEEtitleabstractindextext{%
\begin{abstract}
It is of fundamental importance to find algorithms obtaining optimal performance for learning of statistical models in distributed and communication limited systems. Aiming at characterizing the optimal strategies, we consider learning of Gaussian Processes (GPs) in distributed systems as a pivotal example. We first address a very basic problem: how many bits are required to estimate the inner-products of Gaussian vectors across distributed machines? Using information theoretic bounds, we obtain an optimal solution for the problem which is based on vector quantization. Two suboptimal and more practical schemes are also presented as substitute for the vector quantization scheme. In particular, it is shown that the performance of one of the practical schemes which is called per-symbol quantization is very close to the optimal one. Schemes provided for the inner-product calculations are incorporated into our proposed distributed learning methods for GPs. Experimental results show that with spending few bits per symbol in our communication scheme, our proposed methods outperform previous zero rate distributed GP learning schemes such as Bayesian Committee Model (BCM) and Product of experts (PoE).
\end{abstract}

\begin{IEEEkeywords}
Distributed Learning, Gaussian Processes, Communication Constraints, Vector Quantization.
\end{IEEEkeywords}}

\maketitle

\IEEEdisplaynontitleabstractindextext

%
\IEEEpeerreviewmaketitle

\IEEEraisesectionheading{\section{Introduction}\label{sec:introduction}}

%
%
%
%
\IEEEPARstart{C}{omplex} systems are ubiquitous; biological, social, transportation networks are examples that we face them on a daily basis. Managing and controlling such systems require learning from data observed and measured at possibly distinct points of the systems. Extracted Data have several intrinsic features: they contain various forms of attributes which are noisy and incomplete; they are massive and usually generated distributively across the networks. Effective and efficient utilization of available resources plays an important role in learning, managing, and controlling of such complex systems. Through  a distributed and joint communication-computation design,  this paper aims at presenting important machine learning instances where improvement in  performance compared to conventional methods can be analytically and experimentally justified. 

Many frameworks are developed to handle complex and distributed datasets efficiently. Hadoop Distributed File Systems (HDFS) \cite{shvachko2010hadoop}, Google File System (GFS) \cite{ghemawat2003google} are examples of distributed storage systems. MapReduce \cite{dean2008mapreduce} and Spark \cite{zaharia2010spark} are well known frameworks for distributed data processing systems. These platforms can be used to implement machine learning algorithms naturally \cite{meng2016mllib}.

Distributed processing and storage systems are designed to be universal by separating  computation and communication tasks. The aim of communication is to exchange information between distributed machines efficiently, whereas the aim of computation is to run a distributed algorithm with the given communication platform. However, communication can be adjusted and modified based on the problem at hand achieving superior results. A new trend in research is started to design  distributed and joint communication-computation optimal systems \cite{jordan2015machine}. 




In this paper, we present some important machine learning instances in distributed settings and determine analytically and experimentally how one can obtain the optimal trade-off between performance and communication cost. The problem that we focus on  is learning of \emph{Gaussian Processes} (GPs) which are fundamental models applicable to many machine learning tasks. 
We propose efficient methods for GP learning and analyze the performance and communication cost of the proposed methods. Although GPs can be used for both classification and regression, we only consider regression in presented applications and experiments.  However, the proposed methods are also applicable to other learning models such as distributed manifold learning, KNN-based algorithms, etc.

There exist abundant studies related to distributed learning. Here, we address a few of them which we have found closest to the setting considered in this paper. From information theoretic point of view, distributed statistical inference is addressed by Ahlswede \cite{ahlswede1986hypothesis}, Amari and Han \cite{amari1998statistical, amari1995parameter, han1987hypothesis}. In their settings, data are assumed to be distributed between two machines and through minimal communication, it is desired to test a hypothesis or estimate a parameter optimally.  They used the \emph{method of types} \cite{cover2012elements}   to obtain  some useful error and communication bounds on several problems. A survey by Han et. al. \cite{amari1998statistical} summarizes all early studies in that direction where \emph{hypothesis testing}, \emph{parameter estimation}, and \emph{classification} problems in distributed settings are addressed.

Recently, several researchers worked on distributed statistical learning problems and used tools from information theory to obtain bounds on the performance of the algorithms: \cite{rahman2012optimality,jordan2016communication, duchi2014optimality, zhang2013information}. For instance, Duchi et. al. \cite{duchi2014optimality} investigated the minimum amount of communication required for achieving centralized minimax risk in certain estimators. Rahman et. al. \cite{rahman2012optimality} analyzed the problem of distributed hypothesis testing from information theoretic perspective and prove the optimality of binning method for conditionally independent sources.

There are papers addressing the problem of distributed statistical inference from machine learning point of view  \cite{liu2014distributed, meng2013distributed, broderick2013streaming, duchi2014optimality, balcan2015communication}. For instance, Liu in \cite{liu2014distributed} showed that how the maximum likelihood estimation of parameters of exponential families can be obtained distributively. Meng et. al. \cite{meng2013distributed} proposed a distributed algorithm for learning of inverse covariance matrix in Gaussian graphical models. Broderick et. al. \cite{broderick2013streaming} proposed a distributed algorithm for calculating the posterior distribution using variational inference over distributed datasets. Authors of \cite{balcan2015communication} have addressed the kernel PCA in distributed systems on a large amount of data.

Distributed machine learning problems are also related to  distributed optimization problems as optimization of  an objective function during their training or test procedure is part of learning tasks.  Therefore, several studies on  distributed optimization algorithms \cite{boyd2011distributed} can directly or indirectly be applied to  many machine learning models. Alternating Direction Method of Multipliers (ADMM) \cite{boyd2011distributed} is one of the most successful distributed algorithms for solving optimization problems over distributed datasets. There are many papers that are based on the ADMM algorithm \cite{zhang2014asynchronous, forero2010consensus, liu2012distributed}. For example, Forero et. al. \cite{forero2010consensus} proposed a re-parameterization for SVM learning in distributed manner based on ADMM. 

As mentioned before, in this paper we have considered GP learning for distributed systems. There are many works that propose methods for learning GPs distributively \cite{deisenroth2015distributed}. Deisenroth et. al. \cite{deisenroth2015distributed} presented a distributed model for GP learning which is based on models called  \emph{product of experts} (PoE). PoE-based models assume that the local dataset in each machine is independent of the other local datasets. Thus, the marginal likelihood factorizes into product of individual local marginal likelihoods.
%
In these models, the overall gram matrix will be block diagonal where the blocks are the local gram matrices of the machines. In PoE models, after training the model hyper-parameters, predictions of GP experts are combined. There are several combining methods such as PoE \cite{ng2014hierarchical}, Bayesian Committee Machine (BCM) \cite{tresp2000bayesian} and the recently proposed method robust BCM (rBCM) \cite{deisenroth2015distributed}.

PoE models need no data transmission during training which makes them favorable when communication is very restricted. However, assuming independence of local experts is unfavorable in many applications. For example, in RBF kernels that are non-increasing function of Euclidean distances of inputs, such assumption may decrease the performance of PoE models. This is due to the fact that data is distributed randomly between machines and nearby inputs may be located at distinct machines. In this case, transmission of inputs is necessary to achieve the optimal performance of the learning task.

There are other notable studies on localized and distributed learning of sparse GPs such as \cite{titsias2009variational, quinonero2005unifying, gal2014distributed, hensman2013gaussian, rasmussen2006gaussian}. The main idea behind  these works is to select or extract $m \ll n$ inducing points from the training datasets so that the trained sparse GP is  close  to the full GP.  Authors of \cite{gal2014distributed} proposed a re-parameterization on \cite{titsias2009variational} and \cite{titsias2010bayesian} for applying sparse GPs on a network of multiple machines. This method for obtaining inducing points, in each iteration of the optimization procedure transmits all inducing and hyperparameters of the GPs to all machines. Due to the size of the inducing set and the number of iterations till the convergence, the communication cost of this method is usually very high. Moreover, the exact communication cost of this method is not calculable before running the training procedure.

This paper is organized as follows. Section \ref{sec:gp background} briefly describes background on GPs for the regression problem. In Section \ref{sec:problem definition}, we formally define the main problem of the paper. In Section \ref{sec:trans schemes}, we propose three methods for solving the inner-product estimation problem in a distributed system. In section \ref{sec:distr gp}, we show how the proposed methods of Section \ref{sec:trans schemes} can be used for distributed learning of GPs. Finally, in Section \ref{sec:experiments}, we evaluate our methods over synthetic and real datasets.

\section{Gaussian Processes} \label{sec:gp background}
Statistical machine learning models can be divided into \emph{parametric} and \emph{non-parametric} models. GPs are powerful non-parametric models that are used in many supervised learning tasks such as classification and regression \cite{rasmussen2006gaussian}.
Basically, a GP is a stochastic process that any finite collection of its points is jointly Gaussian. To learn a GP model, it is sufficient to estimate its mean and covariance functions. The covariance matrix of a GP is sometimes called \emph{gram matrix} which is denoted by $G$ through out this paper.
 

In regression problems, we wish to estimate a real-valued function $f(x)$ where $x \in \real^d$. We can use a GP as a prior distribution over these real-valued functions. Suppose our training dataset is $\{(x_i, y_i)\}_{i=1}^n$, which we call $\{x_i\}$ input samples and $\{y_i\}$ target observations. The training examples are noisy realization of the target function $f(x)$ where each $y_i$ is obtained by adding a Gaussian noise to $f(x)$ at input $x_i$ as follows:
\begin{equation}
	y_i = f(x_i) + \epsilon_i, \quad \epsilon_i \sim \norm(0, \sigma^2_\epsilon).
\end{equation}
The posterior process is a GP with the following mean and covariance functions: 
\begin{align}
	m(x) &= G_{x n} (G + \sigma^2_\epsilon I)^{-1} \vect{y} \\
	cov(x, x') &= k(x, x') + G_{x n} (G + \sigma^2_\epsilon I)^{-1} G_{n x'}, 
\end{align}
where $G_{x n} = [k(x, x_1), \cdots, k(x, x_n)]$ is an $n$-dimensional row vector of kernel values between $x$ and the training inputs, $\vect{y}$ is the vector of all target observations $\vect{y} = [y_1, \cdots, y_n]^T$, and $G_{n x} = G_{x n}^T$. Then, the predictive distribution for a new input $x_*$ is $\prob(y_* | x_*, \vect{y}) = \norm\left(m(x_*), cov(x_*, x_*) + \sigma_\epsilon^2 \right)$. Note that $G$ is an $n \times n$ matrix where  $G_{ij}$ is the correlation of $f(x_i)$ and $f(x_j)$.


In the training procedure, the hyperparameters of the model (e.g. $\sigma_\epsilon^2$, and parameters of the kernel function) need to be trained. Usually, the hyperparameters are obtained by maximizing the marginal likelihood $\prob(\vect{y})$.
Although there exist many kernel functions such as linear, polynomial, squared exponential, Matern, and etc \cite{rasmussen2006gaussian},
we select the following linear kernel function:
\begin{equation}\label{eq:linear-kernel}
	k(x_i, x_j) = a~x_i^T x_j+b,
\end{equation}
where $a$ and $b$ are the hyperparameters of the kernel.
Although our proposed methods for GP learning is based on the linear kernel, they can be readily applied on any Radial Basis Function (RBF) kernels as experimentally  will showed in Section \ref{sec:experiments}. It is worth mentioning that kernels that are only a function of $\Vert x_i - x_j \Vert$ are called RBFs.

\section{Problem Definition} \label{sec:problem definition}
Suppose we have only two machines in our distributed system so that each machine stores some portion of the whole dataset. Then the gram matrix can be partitioned as follows:
\begin{equation}
G =
\begin{bmatrix}
G_{11} &	G_{12}	\\
G_{21} &	G_{22}
\end{bmatrix},
\end{equation} 
where $G_{11}$ (resp. $G_{22}$) is the gram matrix of data stored in the first (resp. second) machine and $G_{12}$ is the cross gram matrix between data of the first and second machines. Moreover, $G_{12}= G_{21}^T$. Note that $G_{11}$ and $G_{22}$ can be calculated locally without any data transfer between machines. However, $G_{12}$ and its transpose $G_{21}$ must be calculated cooperatively by the two machines which requires intensive communication.  

Let the first machine be responsible for learning the GP model. This machine calculates $G_{11}$ locally, whereas $G_{12}$ and $G_{22}$ need to be calculated using messages received from the second machine. In Section \ref{sec:distr gp}, we will show that if $G_{12}$ is available, then $G_{22}$ can be efficiently estimated. Thus, our main goal is to find an efficient method to obtain $G_{12}$ in distributed manner with minimum distortion.

In fact, we seek for an algorithm that has low communication cost and at the same time has low distortion in reconstruction of the target matrix. Without any limitation on communication cost, we gather the whole dataset at  the first machine which leads to the best performance with zero distortion. In Section \ref{sec:trans schemes}, we propose novel methods respecting imposed communication constraints and achieving optimal/suboptimal performance in the learning procedure. 

In our basic setting, the gram matrix of GP is first estimated by communicating information bits between machines. The estimated gram matrix is then used to learn the latent function. Therefore, minimum distortion in estimating of the gram matrix is desired. By considering a linear kernel function for the GP model, our goal is to calculate the inner product of two datasets that stored in different machines with minimum distortion. 

To put it formally, consider two datasets $X = \{x_1, \cdots, x_n\}$ and $Y = \{y_1, \cdots, y_n\}$ stored in machines \machine{x} and \machine{y}, respectively. Samples $x_i$ and $y_i$ are $d$-dimensional random vectors with zero means. We would like to calculate the inner product $\langle x_i, y_j \rangle$ for any $x_i \in X$ and $y_j \in Y$ in machine \machine{y}. Exact calculation of inner products requires transmission of the whole dataset $X$ to machine \machine{y} which is not efficient in many practical situations due to its communication cost. Lowering the communication cost, however, results in reduction in the quality of computed inner products. Therefore, finding the tradeoff between the communication cost and the quality of calculated inner products plays an important role in the design of practical systems. 

To characterize the tradeoff, we limit the number of transmission bits per observation to $R$ bits; leading to the total of $nR$ transmission bits. These bits are used to design an encoding map $\phi$ from X to an index set $\mathcal{I} = \left\{ 1,2, \cdots, 2^{nR} \right\}$. The encoded index is transmitted to machine \machine{y} where a decoding map $\psi$ is used to map the index to another set $\hat X = \{\hat x_1, \cdots, \hat x_n \}$ which is called the set of reconstructed points. In other words, each $x_i$ is quantized to $\hat{x}_i$.

We would like to design $\phi$ and $\psi$ such that the distortion in computing the inner products is minimized. The distortion is  measured by
\begin{align}\label{eq:err-func-org}
D &= \frac{1}{n^2} \sum_{i=1}^{n} \sum_{j=1}^{n} \Big(\langle x_i,  y_j \rangle - \langle \hat{x}_i, y_j \rangle \Big) ^ 2,
\end{align}
that is,
\begin{equation}\label{eq:err-func}
D = \frac{1}{n} \sum_{i=1}^{n} (x_i-\hat{x}_i)^T S_y (x_i-\hat{x}_i),
\end{equation}
where $S_y$ is the sample covariance matrix of $Y$.
\begin{definition}
Achievable $(R, D)$\\
A rate-distortion pair $(R,D)$ is achievable if there exist encoding and decoding maps $\phi$ and $\psi$ such that the transmission rate is limited by $R$ and the distortion in reconstructing the inner products is limited by $D$.
\end{definition}
In this paper, we are interested in characterizing the set of all achievable rate-distortion pairs. 
The region of all achievable rate-distortion pairs is a convex region. To show this, one needs to prove that if $(R_1, D_1)$ and $(R_2, D_2)$ are two achievable pairs, then $(\lambda R_1 + (1 - \lambda) R_2, \lambda D_1 + (1-\lambda)D_2)$ is an achievable pair for any $\lambda\in[0,1]$. A coding scheme which uses both schemes for different portions of data can achieve this. Let $\lambda$ portion of data be encoded by the first scheme and the rest with the second scheme. It is easy to show that the overall rate-distortion pair $(\lambda R_1 + (1 - \lambda) R_2, \lambda D_1 + (1-\lambda)D_2)$ is indeed achievable.

Extreme points of the achievable region can be obtained by solving the following optimization problem
\begin{align}
&\min_{ (\phi ,  \psi)}  \mu R + (1-\mu) D,	\quad \mu \in [0, 1].
\end{align}

\section{Transmission Schemes} \label{sec:trans schemes}
In this section, we present three different schemes for transmitting the dataset $X$ to machine \machine{y} with low communication cost and distortion. The first scheme is based on an optimal compression method which characterizes rate-distortion region theoretically for normally distributed samples. Although the scheme is optimal, it is not practical due to computation complexity required for block coding. Thus, we propose another scheme that is an approximate version of the optimal one. This method uses per symbol coding instead of block coding. Experiments show its performance is near optimal and it is applicable for many distributed settings. The third scheme is even more simpler and does not require any nonlinear map. It is based on ideas from dimension reduction schemes which are tailored to our distributed setting. This method has more communication cost than the previous two schemes, but it is very simple and fairly efficient in many applications.

\subsection{Compression}

Let us assume the datasets $X$ and $Y$ are drawn from two independent distributions as
\begin{align}
\mathbb{P}(X,Y) & = \mathbb{P}(X)\mathbb{P}(Y)\nonumber\\
& = \prod_{i=1}^n \mathbb{P}(x_i) \prod_{i=1}^n \mathbb{P}(y_i).
\end{align}
Taking the expectation of distortion in \eqref{eq:err-func} with respect to $y$, we obtain
\begin{align} \label{eq:expected distortion-func}
\expect_{\prob(y)}\{D\} &=\frac{1}{n} \sum_{i=1}^{n}(x_i - \hat{x}_i)^T Q_y (x_i - \hat{x}_i),
\end{align}
where $Q_y$ is the covariance matrix of $Y$ defined by 
$Q_y = \mathbb{E} \{ y^T y\}.$ It is worth mentioning that under the preceding conditions, the covariance matrix of $y$ is the only parameter influencing the achievable rate-distortion region and therefore the knowledge of the exact distribution of $Y$ is not required. 

We would like to characterize rate-distortion region for the above setting. This problem is an instance of the rate-distortion problem in information theory where the distortion is defined by \eqref{eq:err-func}. The optimal coding scheme achieving points on the boundary of the achievable region is obtained by solving the following optimization problem \cite{cover2012elements}:
\begin{align}\label{eq:rate-distortion}
R &= \min_{p(\hat{x} \mid x)} I(x; \hat{x})		\\
&\text{subject to:}	\nonumber \\
&\qquad \mathbb{E}_{P(x,\hat{x})}\{d(x, \hat{x})\} \leq D, \nonumber	\label{eq:compression-constraint}
\end{align}
where
\begin{align}
d(x, \hat{x}) &= (x - \hat{x})^T Q_y (x - \hat{x}).
\end{align}
Suppose $P^*(\hat x | x)$ is the solution of \eqref{eq:rate-distortion}. The coding scheme with the rate $R$ and distortion $D$ is obtained as follows:
\begin{enumerate}
\item [] \textbf{\emph{Generating Codebook:}}
Calculate $P^*(\hat x) = \int P(x) P^*(\hat x | x) dx$. Generate a codebook consisting of $2^{n R}$ sets of size $n$ that are drawn i.i.d. from $\prod_{i} P^*(\hat{x}_i)$. We denote the generated random sets by $\hat{X}_1, \cdots, \hat{X}_{2^{n R}}$. The codebook is available at both transmitter (\machine{x}) and receiver (\machine{x}) machines. Note that the codebook can be generated at both machines by an equal initial seed which leads to identical codebooks. Thus, there is no need to send the codebook from a machine to the other.
\item [] \textbf{\emph{Encoding:}} We are given a set $X$. Index it by $j$ if there exists a set in the codebook such that the pair $(X, \hat{X}_j)$ is strongly jointly typical (for definition of strong typicality refer to \cite{cover2012elements}). If there is more than one such $j$, send the first in lexicographic order. If there is no such $j$, send 1 to the receiver. It is clear that transmitting the index $j$ requires $n R$ bits. Since by each transmission we send $n$ data samples, the average rate will be $R$ bits per each sample.
\item [] \textbf{\emph{Decoding:}} At the receiver (machine \machine{y} in our case), select $\hat{X}_j$ as the reproduced set for $X$.
\end{enumerate}
For the proof that this coding scheme asymptotically (when $n \to \infty$)  has rate $R$ and distortion $D$ see \cite{cover2012elements}.
Thus, by solving the optimization problem \eqref{eq:rate-distortion} and using the above coding scheme, we obtain the optimal method for transmitting the set $X$ with rate $R$ and distortion $D$. Unfortunately, in many cases, there is no closed form solution for \eqref{eq:rate-distortion}, but we can obtain a lower bound for it (Theorem \ref{thm:compression}). 

The following theorem presents a lower bound for rate-distortion function \eqref{eq:rate-distortion}.

\begin{theorem} \label{thm:compression}
Let $\{x_1, \cdots, x_n\}$ be $n$ $i.i.d.$ $d$-dimensional zero mean random vectors with covariance matrix $Q_x$, and let the distortion measure be $d(X, \hat{X}) = \sum_{i=1}^{n}(x_i - \hat{x}_i)^T Q_y (x_i - \hat{x}_i)$, where $Q_y$ is a symmetric positive definite matrix. Then the rate distortion function \eqref{eq:rate-distortion} is lower bounded by
\begin{equation}\label{eq:compression_lb}
R_{lb}(D) = h(x) - \frac{1}{2} \log (2 \pi e)^{d} \det(Q_y^{-1}) - \frac{1}{2} \sum_{i=1}^{d} \log q_i,
\end{equation}
where
\begin{equation} \label{eq:q_i}
q_i = 
\left\{
\begin{array}{lr}
\lambda &  \quad if \quad \lambda < \Lambda_{ii} \\
\Lambda_{ii} & \quad if \quad \lambda \geq \Lambda_{ii}
\end{array}
\right.,
\end{equation}
$h(x)$ is entropy of variable $x$, $\Lambda$ is the diagonal matrix of eigenvalues of $Q_x Q_y$, and $\lambda$ is chosen so that $\sum_{i=1}^{d} q_i = D$.
\end{theorem}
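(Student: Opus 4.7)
The plan is to reduce the matrix-weighted rate-distortion problem to a standard mean-squared-error problem by an invertible linear change of coordinates, and then to combine the Shannon lower bound with a water-filling argument for the diagonal of the error covariance.

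First I would set $w = U^{\top}Q_y^{1/2}x$ and $\hat{w} = U^{\top}Q_y^{1/2}\hat{x}$, where $U\Lambda U^{\top} = Q_y^{1/2}Q_x Q_y^{1/2}$. Since $Q_y^{1/2}Q_x Q_y^{1/2}$ is similar to $Q_x Q_y$, the diagonal matrix $\Lambda$ matches the one in the theorem statement. The map $x\mapsto w$ is bijective, hence $I(x;\hat{x}) = I(w;\hat{w})$; at the same time $d(x,\hat{x}) = \|w-\hat{w}\|^2$, and a Jacobian computation gives $h(w) = h(x) + \tfrac{1}{2}\log\det Q_y$. So it suffices to lower bound $\min I(w;\hat{w})$ subject to $\expect\|w-\hat{w}\|^2 \le D$, where $w$ has zero mean and covariance $\Lambda$.

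For any joint distribution of $(w,\hat{w})$ I would use the Shannon-lower-bound chain
\begin{equation*}
I(w;\hat{w}) = h(w) - h(w\mid\hat{w}) \ge h(w) - h(w-\hat{w}),
\end{equation*}
which follows from $h(w\mid\hat{w}) = h(w-\hat{w}\mid\hat{w})$ and conditioning reducing entropy. Writing $\Sigma_e$ for the covariance of $w-\hat{w}$ and $D_i = (\Sigma_e)_{ii}$, Gaussian maximum entropy combined with Hadamard's inequality gives
\begin{equation*}
h(w-\hat{w}) \le \tfrac{1}{2}\log\bigl((2\pi e)^d\det\Sigma_e\bigr) \le \tfrac{1}{2}\log\Bigl((2\pi e)^d\prod_{i=1}^{d} D_i\Bigr),
\end{equation*}
while $\sum_i D_i \le D$ is inherited from the distortion constraint.

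It remains to maximise $\prod_i D_i$ over all $D_i$ actually realisable by some encoder, and this is the step I expect to be the main obstacle. The key reduction is that we may assume $D_i \le \Lambda_{ii}$ without loss of generality: replacing $\hat{w}$ by its componentwise MMSE refinement $\hat{w}_i' = \expect[w_i\mid\hat{w}]$ is a deterministic function of $\hat{w}$, so data processing keeps $I(w;\hat{w})$ from growing, while $D_i' = \expect[\mathrm{Var}(w_i\mid\hat{w})] \le \mathrm{Var}(w_i) = \Lambda_{ii}$ and total distortion does not increase. Maximising $\sum_i \log D_i$ subject to $\sum_i D_i \le D$ and $0\le D_i\le\Lambda_{ii}$ is then a short KKT exercise whose solution is the reverse water-filling $D_i = \min(\lambda,\Lambda_{ii}) = q_i$. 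Plugging this in and substituting $h(w) = h(x) + \tfrac{1}{2}\log\det Q_y$ yields $R(D)\ge h(x) - \tfrac{1}{2}\log\bigl((2\pi e)^d\det Q_y^{-1}\bigr) - \tfrac{1}{2}\sum_i\log q_i = R_{lb}(D)$, as claimed.
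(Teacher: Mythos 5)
Your proof is correct, and it reaches the same reverse water-filling answer as the paper, but the technical route is genuinely different. The paper stays in the original coordinates: it applies the Shannon lower bound $I(x;\hat x)\ge h(x)-\tfrac12\log(2\pi e)^d\det(Q)$ with $Q=\expect\{(x-\hat x)(x-\hat x)^T\}$, and then optimizes $\log\det(Q)$ as a matrix-valued convex program over $\{Q:\tr(QQ_y)\le D,\ Q\preceq Q_x\}$, grinding through the KKT conditions and a sequence of congruence transformations to show the optimal $\tilde Q=U^TQ_y^{1/2}QQ_y^{1/2}U$ is diagonal with entries $q_i$. You instead whiten first via $w=U^TQ_y^{1/2}x$, which turns the weighted distortion into plain MSE and the covariance into $\Lambda$, and then dispatch the optimization with Hadamard's inequality plus a scalar water-filling exercise; this buys a much shorter computation and avoids the matrix KKT analysis entirely. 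More importantly, your MMSE-refinement step ($\hat w_i'=\expect[w_i\mid\hat w]$, data processing, and $\expect[\mathrm{Var}(w_i\mid\hat w)]\le\Lambda_{ii}$) supplies a rigorous justification for the constraint $D_i\le\Lambda_{ii}$ that the paper handles loosely: the paper asserts $Q\preceq Q_x$ from an independence of $x-\hat x$ and $\hat x$ that does not hold for an arbitrary encoder (and its phrase ``minimizing $\log\det(Q)$'' should really be maximizing, as its own Lagrangian signs confirm). So your argument is not only valid but arguably tighter in its logic; the paper's KKT derivation has the side benefit of explicitly exhibiting the optimal error covariance $Q=Q_y^{-1/2}U\tilde QU^TQ_y^{-1/2}$, which it then reuses in Theorem 2 to prove achievability for Gaussian sources.
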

\begin{proof}
The lower bound is obtained as follows. Let  $Q = \mathbb{E}\{(x - \hat{x}) (x - \hat{x})^T\}$, we have
\begin{align}
I(x; \hat{x}) &= h(x) - h(x \mid \hat{x})	\nonumber \\
&= h(x) - h(x - \hat{x} \mid \hat{x}) \nonumber\\
&\stackrel{\text{(a)}}{\ge} h(x) - h(x - \hat{x}) \nonumber	\\
&\stackrel{\text{(b)}}{\ge} h(x) - \frac{1}{2} \log (2 \pi e)^d \det(Q) \label{eq:I_lower bound}
\end{align}
where (a) follows from the fact that conditioning does not increase the entropy; (b) follows from the fact that Gaussian distribution maximizes the entropy conditioned on a covariance matrix.   

One can find the best lower bound in \eqref{eq:I_lower bound} by minimizing $\log \det(Q)$ over all appropriate covariance matrices $Q$. This amounts to solving the following optimization problem:
\begin{align} \label{eq:Q optimization prob}
&\min_Q \log \det(Q)	\\
&\text{subject to:}	\nonumber	\\
&\qquad \mathbb{E}_{P(x,\hat{x})}\{d(x, \hat{x})\} = \tr(Q Q_y)\leq D, \nonumber \\
&\qquad Q_x - Q \succeq 0.	 \nonumber
\end{align}
where the second constraint comes from the fact that independence of $u = x- \hat x$ and $\hat x$ results in an equality in (a). More precisely, $x = u + \hat x$ where $u$ and $\hat x$ are independent; this yields $Q_x = Q_u + Q$ and thus, $Q_x \succeq Q$.

It can be shown that the above problem is convex and therefore KKT conditions provide necessary and sufficient conditions for the optimal solution. To obtain the KKT conditions, we first obtain the Lagrangian as
\begin{eqnarray}
\mathcal{L} = \log \det (Q) - \lambda \tr(Q Q_y) - \tr\left(A^T (Q - Q_x)\right),	\nonumber \\
\quad \lambda \geq 0, \quad A \succeq 0
\end{eqnarray}
where $\lambda$ and $A$ are the Lagrange multipliers. Using $\mathcal{L}$, the KKT conditions amount to
\begin{align}
\frac{\partial \mathcal{L}}{\partial Q} = Q^{-1} - \lambda Q_y - A & = 0, \label{eq:Lagrange_Q_der}\\
\tr \left( A^T (Q - Q_x) \right) &= 0,	\label{eq: constraint_Q_less_Q_x}	\\
\tr (Q Q_y) &= D.  \label{eq:Q constraint}
\end{align}
Since $A^T (Q - Q_x)$ is positive semi-definite, \eqref{eq: constraint_Q_less_Q_x} yields
\begin{equation}
A^T (Q - Q_x) = 0 \label{eq:temp_eq 1}.
\end{equation}
By substituting $A$ from \eqref{eq:Lagrange_Q_der} in \eqref{eq:temp_eq 1}, we obtain:
\begin{align}
(Q^{-1} - \lambda Q_y)(Q - Q_x) &= 0 \label{eq:Q_der main-eq}.
\end{align}
Since $Q_y$ is positive definite, it can be decomposed as $Q_y = Q_y^{1/2} Q_y^{1/2}$ where $Q_y^{1/2}$ is the symmetric squared root of $Q_y$.
Multiplying \eqref{eq:Q_der main-eq} by $Q^{-1/2}$ from left, $Q^{1/2}$ from right and $Q_y^{-1/2} Q_y^{1/2}$ from middle, yields
\begin{equation}
Q_y^{-1/2} \left(Q^{-1} - \lambda Q_y\right) Q_y^{-1/2} Q_y^{1/2} \left(Q - Q_x\right) Q_y^{1/2}= 0,
\end{equation}
that is
\begin{multline}
\left(\left(Q_y^{1/2} Q Q_y^{1/2}\right)^{-1} - \lambda I \right) \\
\left(Q_y^{1/2} Q Q_y^{1/2} - Q_y^{1/2} Q_x Q_y^{1/2}\right) = 0  \label{eq:temp 2}.
\end{multline}
We can decompose $Q_y^{1/2} Q_x Q_y^{1/2}$ as follows
\begin{equation} \label{eq: Q_x Q_y svd}
Q_y^{1/2} Q_x Q_y^{1/2} = U \Lambda U^{T},
\end{equation}
where $U$ is a unitary matrix. Since the eigenvalues of $Q_y^{1/2} Q_x Q_y^{1/2}$ are equal to eigenvalues of $Q_x Q_y$, $\Lambda$  is a diagonal matrix which includes eigenvalues of $Q_x Q_y$.
By substituting \eqref{eq: Q_x Q_y svd} in \eqref{eq:temp 2} we can rewrite \eqref{eq:temp 2} as follows
\begin{multline} \label{eq:temp 4}
U^T\left(\left(Q_y^{1/2} Q Q_y^{1/2}\right)^{-1} - \lambda I \right)U U^T \\
\left(Q_y^{1/2} Q Q_y^{1/2} - U \Lambda U^T \right) U = 0.
\end{multline}
Here, \eqref{eq:temp 2} is multiplied by $U^T$ from left, $U$ from right, and $U U^T$ from the middle.
Defining $\tilde{Q} =U^T Q_y^{1/2} Q Q_y^{1/2} U$, from \eqref{eq:temp 4} obtains
\begin{align*}
&(\tilde{Q}^{-1} - \lambda I)(\tilde{Q} - \Lambda) = 0.
\end{align*}
The above equation forces $\tilde{Q}$ to be diagonal. Each element on diagonal of $\tilde{Q}$ is whether $1/\lambda$ or $\Lambda_{ii}$. 
Clearly, since $Q =  Q_y^{-1/2} U \tilde{Q} U^T Q_y^{-1/2}$, any solution for $Q$ can be mapped uniquely to $\tilde{Q}$  and vice versa. It is easy to show that $\tilde{Q}$ which is defined as follows satisfies all the KKT conditions and therefore, after mapping it to $Q$, it is an optimal solution for \eqref{eq:Q optimization prob}
\begin{align}
\tilde{Q} &= diag(q_1, ..., q_d)	 \label{eq: Q tilde}\\
q_i &= 
\left\{
\begin{array}{lr}
1/\lambda &  \quad 1/\lambda < \Lambda_{ii}	\\
\Lambda_{ii} & \quad \text{otherwise}
\end{array}
\right.
\end{align}
where $\lambda$ is chosen so that:
\begin{equation}
\sum_{i=1}^{d} q_i = D.
\end{equation}
Finally, by substituting $Q =  Q_y^{-1/2} U \tilde{Q} U^T Q_y^{-1/2}$ in \eqref{eq:I_lower bound} we obtain the lower bound \eqref{eq:compression_lb}.
\end{proof}
Theorem \ref{thm:compression} gives a lower bound on the rate distortion function \eqref{eq:rate-distortion}. In some cases such as the Gaussian distribution the lower bound is achievable (see Theorem \ref{thm:normal achiveability}), but it is not the case in general. Although this theorem does not propose a coding scheme for data compression, it gives some useful clues for designing low communication rate coding schemes. According to the lower bound $R_{lb}(D)$ in \eqref{eq:compression_lb}, eigenvalues of the product matrix $Q_x Q_y$ plays an important role in bit rate allocation. In other words, dimensions with larger eigenvalues need more bits compared to dimensions with small eigenvalues. This intuition lead us to propose an approximate coding scheme in the next section for Gaussian datasets.

The following theorem shows that the lower bound \eqref{eq:compression_lb} is achievable when the dataset $X$ is drawn from a normal distribution. In this case, the achievable rate distortion region can be fully characterized as the following theorem presents.

\begin{theorem} \label{thm:normal achiveability}
If $\{x_1, \cdots, x_n\}$ are drawn $i.i.d.$ from a normal distribution $\mathcal{N}(0, Q_x)$, then the lower bound \eqref{eq:compression_lb} is achievable if we choose $\{\hat x_i\}$ as follow:
\begin{equation}
x_i = \hat{x}_i + z_i,\quad \hat{x}_i \sim \mathcal{N}(0, Q_x - Q), \quad z_i \sim \mathcal{N}(0, Q)	\label{eq:x_hat for normal}
\end{equation}
where $\hat{x}_i$ and $z_i$ are independent, and $Q$ is chosen so that $\tr(Q Q_y) = D$ and $Q \preceq Q_x$.
\end{theorem}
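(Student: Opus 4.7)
The plan is to verify that the specific joint distribution on $(x,\hat{x})$ given in \eqref{eq:x_hat for normal} turns both inequalities in the proof of Theorem~\ref{thm:compression} into equalities and meets the distortion constraint with the optimal $Q$ identified by the KKT analysis, and then to invoke the standard rate-distortion achievability theorem.

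First I would compute $I(x;\hat{x})$ directly under the prescribed test channel. Writing $x = \hat{x} + z$ with $\hat{x}\perp z$, both Gaussian, the chain
\begin{align*}
I(x;\hat{x}) &= h(x) - h(x-\hat{x}\mid \hat{x}) \\
&= h(x) - h(z) \\
&= \tfrac{1}{2}\log(2\pi e)^d \det(Q_x) - \tfrac{1}{2}\log(2\pi e)^d\det(Q)
\end{align*}
shows that step (a) of Theorem~\ref{thm:compression} is tight because conditioning is dropped without loss when $z$ and $\hat{x}$ are independent, and step (b) is tight because $z$ is exactly Gaussian with covariance $Q$. This reduces the achievability question to choosing $Q$ so that $\tfrac{1}{2}\log(2\pi e)^d\det(Q)$ is as large as possible subject to the distortion constraint $\tr(QQ_y)\le D$ and $Q\preceq Q_x$ (the latter being needed so that $Q_x-Q$ is a valid covariance for $\hat{x}$).

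Second, this is precisely the convex program \eqref{eq:Q optimization prob} already solved inside Theorem~\ref{thm:compression}. I would therefore simply invoke its KKT characterization: the optimal $Q^{\star}$ is $Q^{\star} = Q_y^{-1/2}U\tilde{Q}U^T Q_y^{-1/2}$ with $\tilde{Q}=\mathrm{diag}(q_1,\dots,q_d)$ as in \eqref{eq: Q tilde} and the water level $\lambda$ chosen so that $\sum_i q_i = D$, i.e.\ $\tr(Q^{\star}Q_y)=D$. Plugging $Q^{\star}$ into $\tfrac{1}{2}\log(2\pi e)^d\det(Q)$ and using $\det(Q^{\star}) = \det(Q_y^{-1})\det(\tilde Q) = \det(Q_y^{-1})\prod_i q_i$ gives
\begin{equation*}
I(x;\hat{x}) = h(x) - \tfrac{1}{2}\log(2\pi e)^d\det(Q_y^{-1}) - \tfrac{1}{2}\sum_{i=1}^d\log q_i = R_{lb}(D),
\end{equation*}
matching the lower bound exactly. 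I would also briefly check feasibility, namely that the chosen $Q^{\star}$ satisfies $Q^{\star}\preceq Q_x$: because each diagonal entry of $\tilde{Q}$ is clipped at $\Lambda_{ii}$, and $\Lambda$ is the diagonal of $U^T Q_y^{1/2}Q_x Q_y^{1/2} U$, this clipping is exactly what enforces $Q_y^{1/2}Q^{\star}Q_y^{1/2}\preceq Q_y^{1/2}Q_x Q_y^{1/2}$.

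Finally, achievability of a genuine code at rate $R_{lb}(D)$ with distortion $D$ follows from the rate-distortion coding theorem in \cite{cover2012elements} applied to the i.i.d.\ source $\{x_i\}$ with the single-letter distortion $d(x,\hat{x})=(x-\hat{x})^T Q_y (x-\hat{x})$: any single-letter conditional $p(\hat{x}\mid x)$ meeting $\mathbb{E}[d(x,\hat{x})]\le D$ yields a sequence of codes of rate approaching $I(x;\hat{x})$. The main obstacle, if any, is purely bookkeeping: making sure the distortion constraint is tight under the optimal $Q^{\star}$ (handled by the water-filling choice of $\lambda$) and that the feasibility constraint $Q^{\star}\preceq Q_x$ indeed holds so that $\hat{x}\sim\mathcal{N}(0,Q_x-Q^{\star})$ is well defined; both follow directly from the KKT structure already established in Theorem~\ref{thm:compression}.
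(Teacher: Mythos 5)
Your proposal is correct and follows essentially the same route as the paper's proof: fix the Gaussian test channel $x=\hat{x}+z$, compute $I(x;\hat{x})=\frac{1}{2}\log\det(Q_x)-\frac{1}{2}\log\det(Q)$, and substitute the water-filling $Q^{\star}=Q_y^{-1/2}U\tilde{Q}U^{T}Q_y^{-1/2}$ from the KKT analysis of Theorem~\ref{thm:compression} to match $R_{lb}(D)$. You are somewhat more explicit than the paper about the tightness of steps (a) and (b), the feasibility check $Q^{\star}\preceq Q_x$, and the final appeal to the rate-distortion coding theorem, but these are elaborations of the same argument rather than a different one.
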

\begin{proof}
To find the conditional distribution $\prob(\hat x | x)$ that achieves the lower bound \eqref{eq:compression_lb}, it is usually more convenient to use conditional distribution $\prob(x | \hat x)$. If we choose the joint distribution of $x_i$ and $\hat{x}_i$ using \eqref{eq:x_hat for normal}, then
\begin{equation}
I(x; \hat x) = \half \log \det(Q_x) - \half \log \det(Q).
\end{equation}
By setting $Q =  Q_y^{-1/2} U \tilde{Q} U^T Q_y^{-1/2}$, where $\tilde{Q}$ and $U$ is determined by \eqref{eq: Q tilde} and \eqref{eq: Q_x Q_y svd}, respectively; it is easy to show that $Q$ satisfies $\tr(Q Q_y) = D$ and $Q \preceq Q_x$. Then, we have
\begin{equation}
I(x; \hat x) = \half \log \det(Q_x Q_y) - \half \sum_{i=1}^{d} \log q_i,
\end{equation}
which is equal to the lower bound \eqref{eq:compression_lb}. 
\end{proof}

The theorem \ref{thm:normal achiveability} shows that the optimal compression for Gaussian distribution must be performed using the product of covariance matrices $Q_x$ and $Q_y$. We will see that this result is similar to Theorem \ref{thm:dim_red} where the optimal linear dimension reduction is performed by product of the sample covariance matrices $S_x$ and $S_y$. Theorem \ref{thm:normal achiveability} is in fact the well known reverse-water filling problem on eigenvalues of matrix $Q_x Q_y$. In other words, if we decompose $Q_y^{1/2} Q_x Q_y^{1/2}$ as $U \Lambda U^T$ where $\Lambda$ is a diagonal matrix of eigenvalues of $Q_x Q_y$, then we should consume more bits for dimensions with larger corresponding eigenvalue $\Lambda_{ii}$. According to the theorem, for dimensions with eigenvalues less than a threshold $\lambda$ no bits are allocated (i.e. they are not encoded and transmitted), and for dimensions greater than the threshold we consume bits to reduce distortion for them. In fact, $q_i$ in Theorem \ref{thm:normal achiveability} is the distortion value for $i$-th dimension after compression. 


According to Theorem \ref{thm:normal achiveability}, the machine \machine{x} needs $Q_y$ for compressing dataset $X$, thus, if $Q_y$ is not known for the machine \machine{x}, transmitting it has \order($d^2$) bits communication cost. After compression, transmitting the compressed dataset $\hat{X} = \{\hat{x}_1, \cdots, \hat{x}_n\}$ requires \order($n R$) bits. 

Although the compression method proposed by Theorem \ref{thm:normal achiveability} is optimal and simulation results in section \ref{sec:experiments} (see \figurename{} \ref{fig:distortion-cc}) shows that it significantly outperforms the dimension reduction method, it is unfortunately not practical in real world due to exponential computation complexity for encoding. To overcome this issue, in the next section we have proposed an applicable approximate method for compression which has near optimal performance.  


 
\subsection{Per-Symbol Compression Scheme} \label{sec:approx compression}
Although the proposed method for compression in Theorem \ref{thm:normal achiveability} is optimal for normal data samples, it is not practically favorable to implement in many cases as it requires vector quantizers which are designed to work on blocks of data samples. Thus, a per-symbol quantizer which is easier to implement is favorable. 

In this section, we propose a simple yet effective quantizer for normal data samples which operates near optimal in many practical situations. In the proposed method, we first transform samples of the dataset $X$ by $U^T Q_y^{1/2}$, where $Q_y^{1/2}$ is the squared root of $Q_y$, and $U$ is obtained by solving the following singular value decomposition problem:
\begin{equation}
Q_y^{1/2} Q_x Q_y^{1/2} = U \Lambda U^T.
\end{equation}
It is easy to see that the above transformation makes dimensions of samples $\{x_i\}$ uncorrelated. Hence, for Gaussian samples the dimensions are independent. We denote the transformed dataset by $X' = \{x_1', \cdots, x_n'\}$. Thus, we can consider each $x_i'$ as $d$ independent normal random variables. Finally, we quantize each dimension separately. Our coding scheme is as follows:
\begin{enumerate}
	\item [] \textbf{\emph{Encoder:}} Transform dataset $X = \{x_1, \cdots, x_n\}$ to $X' = \{x_1', \cdots, x_n'\}$ using $x_i' = U^T Q_y^{1/2} x_i$. Quantize each dimension of $\{x_i'\}$ samples using a scalar quantizer. Transmit quantized samples denoting by $\hat{X}' = \{\hat{x}_1', \cdots, \hat{x}_n'\}$ to the receiver.
	\item [] \textbf{\emph{Decoder:}} Transform received data samples using
	\begin{equation}
	\hat{x}_i = Q_y^{-1/2} U \hat{x}_i', \qquad i = 1, \cdots, n,
	\end{equation}
	to reconstruct $\hat{x}_i$.  
\end{enumerate}
In fact, we quantize the transformed version of samples in the encoder and apply the inverse transform on received samples at the decoder. The expected distortion of this method is calculated by taking expectation from \eqref{eq:err-func} as follows:
\begin{align}
\expect\{D\} &= \trace{Q_y \expect\left\{(x - \hat x) (x - \hat x)^T\right\}} \nonumber \\
&=\trace{Q_y \expect\left\{Q_y^{-1/2} U (x' - \hat{x}') (x' - \hat{x}')^T U^T Q_y^{-1/2}\right\}} \nonumber	\\
&= \trace{\expect\left\{(x' - \hat{x}') (x' - \hat{x}')^T\right\}} \nonumber	\\
& = \trace{Q'}, \label{eq:dist of per symbol}
\end{align}
where $Q'$ is the covariance matrix of $x' - \hat{x}'$. Since the dimensions of $x'$ are independent for normal distribution and the quantization is done separately on each dimension, $Q'$ is diagonal, hence by denoting $Q' = diag(\sigma_1, \cdots, \sigma_d)$ the expected distortion is $\sum_{i=1}^{d} \sigma_i$. In fact, $\sigma_i$ is the distortion of the $i$-th dimension. If we want to quantize each sample by $R$ bits, we should allot more bits to the dimensions with larger distortions. 
More precisely, for finding the optimal bit allocation to each dimension, the following integer optimization problem should be solved:
\begin{equation}\label{eq: bit alloc prob}
\begin{split}
(R_1^*, \cdots, R_d^*) &= \argmin_{R_1, \cdots, R_d} \sum_{i=1}^{d} \sigma_i,	\\
&\text{subject to:}	\quad \sum_{i=1}^{d} R_i = R,
\end{split}
\end{equation}  
where $R_i$ is the number of bits allocated to $i$-th dimension. For solving the above minimization problem, $\{\sigma_i\}$ be calculated first.



To relate $\{\sigma_i\}$ and $\{R_i\}$, we use a simple scalar quantizer for each dimension. Assume we have a scalar random variable $u \sim \prob(u)$ which we want to quantize it by $R$ bits. We create $2^R$ equally probable bins over the real axis and use bins centroids as the reproduction points at the decoder (receiver). To compress $u$, we send index of the bin that $u$ belongs to it by $R$ bits.

Assume we denote the $i$-th bin interval by $(a_i, a_{i+1})$ and its centroid by $c_i$, then
\begin{align}
c_i &= 2^R \int_{a_i}^{a_{i+1}} u \prob(u) du, \qquad i = 1, \cdots, 2^R,
\end{align}
where $\{a_i\}$ are determined so that $\int_{a_i}^{a_{i+1}} P(u) du = 2^{-R}$. If $u \sim \norm(0, \sigma_u^2)$, then
\begin{equation}
c_i = \frac{2^R \sigma_u}{\sqrt{2 \pi}} \left(e^{-a_i^2/2\sigma_u^2} - e^{-a_{i+1}^2/2\sigma_u^2}\right), \quad i = 1, \cdots, 2^R.
\end{equation}
If $\{\alpha_i\}$ is the set of bins boundaries for standard normal distribution, then the boundaries for $u$ is obtained by $a_i = \sigma_u \alpha_i$. Note that the $\{\alpha_i\}$ is only function of $R$. Hence, we can calculate $c_i$ using
\begin{equation} \label{eq:c_i}
c_i = \frac{2^R \sigma_u}{\sqrt{2 \pi}} \left(e^{-\alpha_i^2/2} - e^{-\alpha_{i+1}^2/2}\right), \quad i = 1, \cdots, 2^R.
\end{equation}
The above equation shows that the centroids of any normal distribution are simply obtained by multiplying the centroids of standard normal distribution by standard deviation $\sigma_u$.
 
The expected reconstruction error for this scalar quantizer is calculated as follows:
\begin{align}
e(\sigma_u^2, R) &= \int_{-\infty}^{\infty} (u - \hat u)^2 P(u) du	\nonumber \\
&= \sum_{i=1}^{2^R} \int_{a_i}^{a_{i+1}} (u - c_i)^2 P(u) du \nonumber \\
&= \sum_{i=1}^{2^R} \int_{a_i}^{a_{i+1}} u^2 P(u) du + \sum_{i=1}^{2^R} c_i^2 \int_{a_i}^{a_{i+1}} P(u) du  \nonumber \\
&\qquad-2 \sum_{i=1}^{2^R} c_i \int_{a_i}^{a_{i+1}} u P(u) du	\nonumber \\
&= \sigma_u^2 + 2^{-R} \sum_{i=1}^{2^R} c_i^2 - 2 (2^{-R}) \sum_{i=1}^{2^R} c_i^2	\nonumber \\
&= \sigma_u^2 - 2^{-R} \sum_{i=1}^{2^R} c_i^2 \nonumber \\
&= \sigma_u^2 - \sigma_c^2,	 \label{eq:distortion per dim}	
\end{align}
where $\sigma_c^2$ is variance of the centroids.

In our problem setting, from \eqref{eq:dist of per symbol} and \eqref{eq:distortion per dim}, the distortion of $i$-th dimension is $\sigma_i(R_i) = e(\Lambda_{ii}, R_i)$, where $R_i$ is the number of bits allocated to the $i$-th dimension and $\Lambda_{ii}$ is the variance of $i$-th dimension of the transformed dataset $X'$. Now, we can solve the integer program in \eqref{eq: bit alloc prob}. Defining $\Delta \sigma_i = e(\Lambda_{ii}, R_i) - e(\Lambda_{ii}, R_i + 1)$ as the amount of decrease in distortion of $i$-th dimension by adding a new bit, then the greedy algorithm presented in Algorithm \ref{alg:bit alloc}, obtains the optimal bit allocation. In this algorithm, we allocate one bit in each iteration to the dimension with largest $\Delta \sigma$ and update the distortion of that dimension. We iterate this procedure until all available bits are allocated.
\begin{algorithm}[bt]
\begin{algorithmic}
	\caption{Greedy Algorithm for Bit Allocation}\label{alg:bit alloc}
	\State \textbf{Input}: Total bit rate R, Diagonal Matrix $\Lambda$
	\State \textbf{Output}: Dimensions bit rates: $R_1, \cdots, R_d$
	\For {$i = 1, \cdots, d$}
	\State $R_i = 0$
	\State $\Delta \sigma_i = e(\Lambda_{ii}, R_i) - e(\Lambda_{ii}, R_i + 1)$
	\EndFor
	\While {$R > 0$}
	\State $j = \argmax_i \Delta \sigma_i$
	\State $R_j \gets R_j + 1$
	\State  $\Delta \sigma_j = e(\Lambda_{jj}, R_j) - e(\Lambda_{jj}, R_j + 1)$
	\State $R \gets R - 1$
	\EndWhile\\
	\Return $R_1, \cdots, R_d$
\end{algorithmic}
\end{algorithm}

To show that the above greedy bit allocation algorithm is optimal, we rely on the fact that  $\Delta \sigma$ is a decreasing function of $R$. The plot of  $\Delta \sigma$ against the bit rate $R$ shows that it is indeed decreasing exponentially (\figurename{} \ref{fig:Delta-sigma plot}). Note that the figure is plotted for standard normal distribution and behaves similarly for any normal distribution. 

\begin{figure}[t]
	\centering
	\subfloat[]{\includegraphics[width=.49\linewidth]{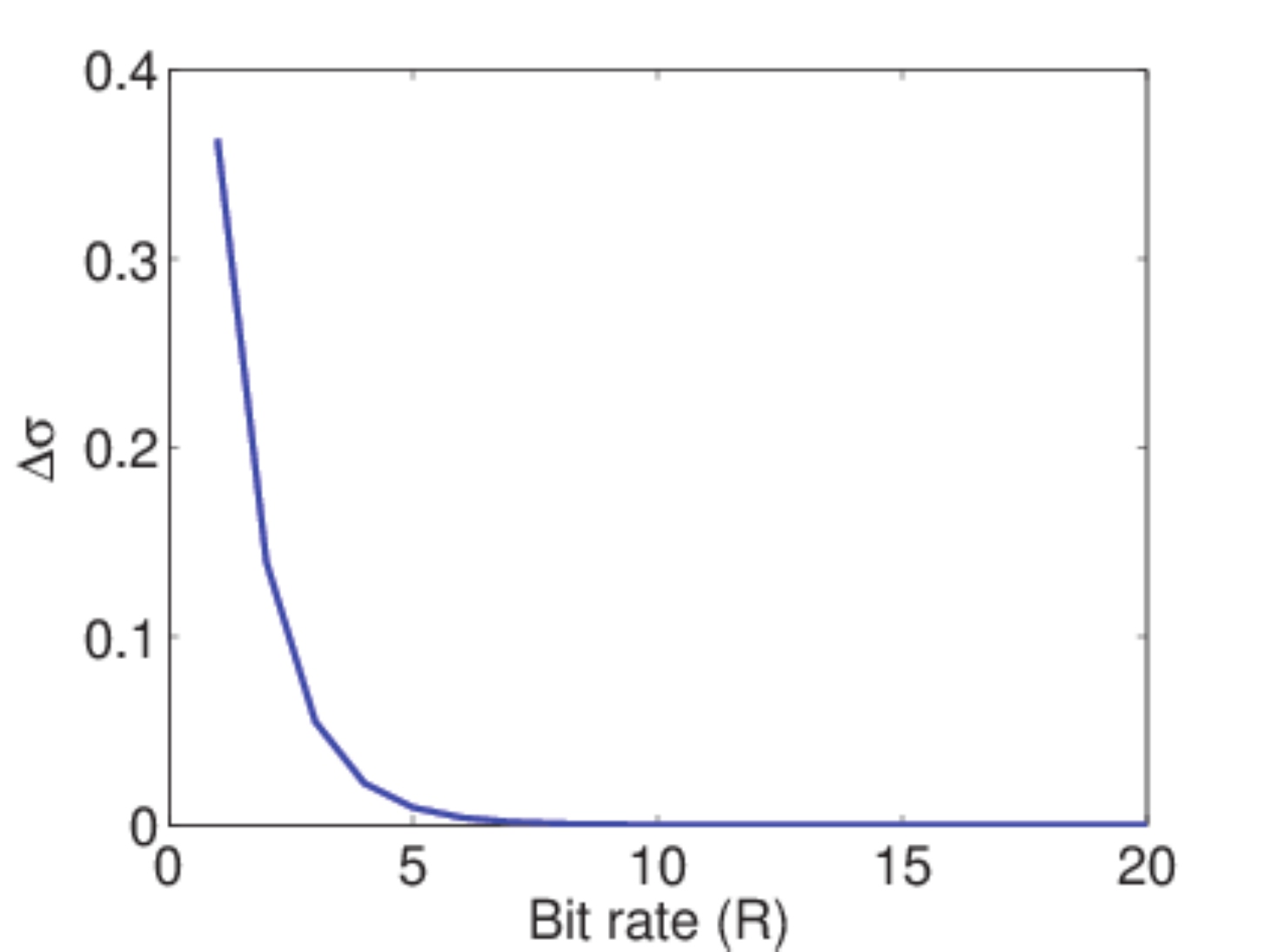}	\label{fig:delta sigma linear}}
	\subfloat[]{\includegraphics[width=.49\linewidth]{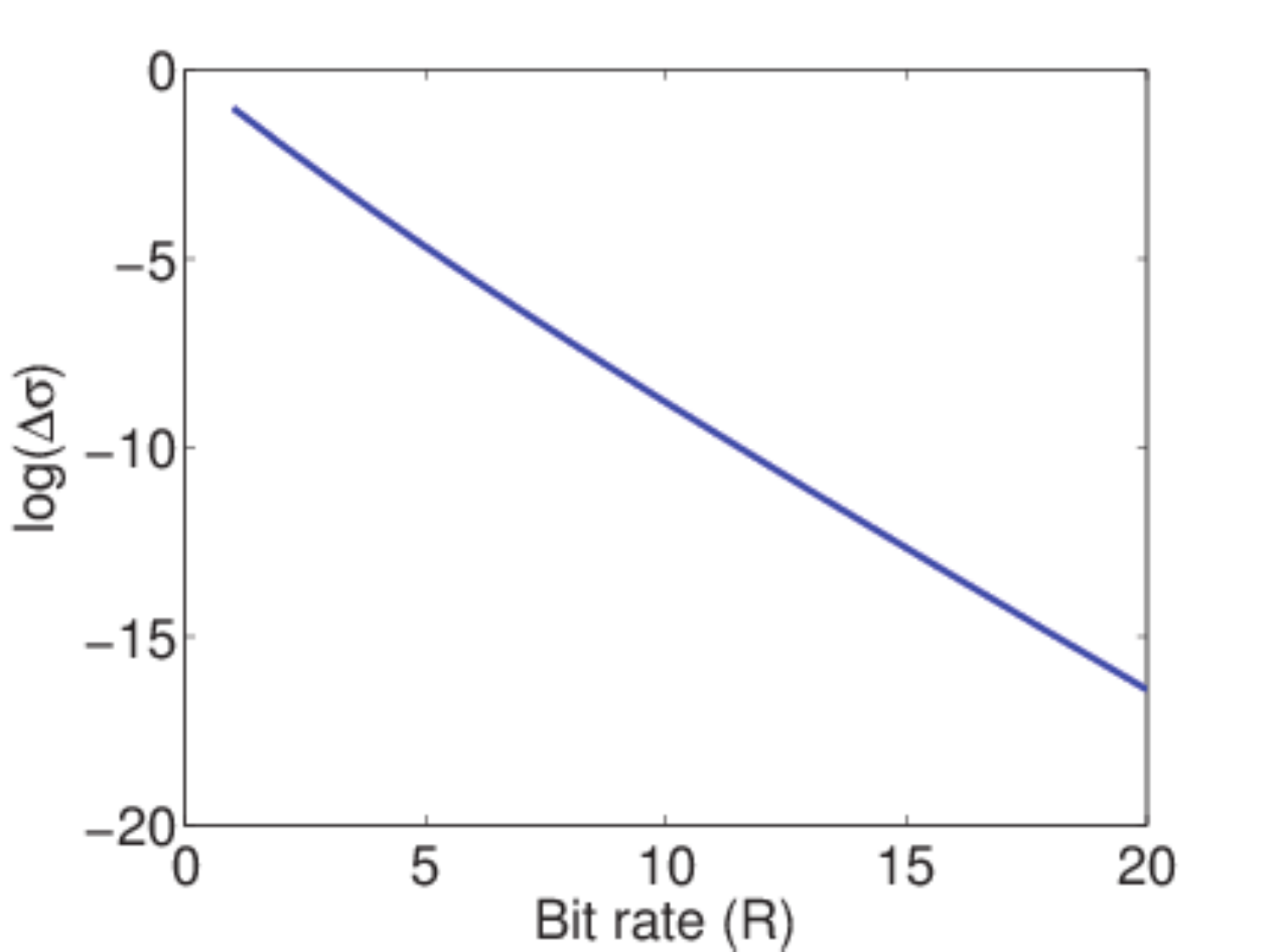}	\label{fig:delta sigma log}}
	\caption{The curve of $\Delta \sigma$ as a function of bit rate $R$ for standard normal distribution. (a) linear scale, (b) logarithmic scale.}
	\label{fig:Delta-sigma plot}
\end{figure}

We denote the optimal bit allocation by $S_{opt}=(R_1^*, \cdots, R_d^*)$ and the greedy solution by $(R_1^g, \cdots, R_d^g)$. We define the distance between the greedy and optimal solutions as follows:
\begin{equation}
r = \sum_{i=1}^{d} \vert R_i^* - R_i^g \vert.
\end{equation}
If the problem has multiple optimal solutions, we select a solution with the minimum distance value $r_{min}$. If $r_{min} = 0$, then the greedy solution is also optimal. Hence, we assume that $r_{min} > 0$ and show that it leads to a contradiction.

If $r_{min} > 0$, then there exist two bit rates $R_i^g$ and $R_j^g$ so that $R_i^g > R_i^*$ and $R_j^g < R_j^*$. 
Assume that in iteration $t$ of the greedy algorithm, the bit rate of dimension $i$ exceeds $R_i^*$, and the bit rate of dimension $j$ is $R_j^t$. Thus,
\begin{equation}\label{eq:delta i and j}
\Delta \sigma(R_i^*) \geq \Delta \sigma(R_j^t).
\end{equation} 
On the other hand, it is clear that $R_j^t \leq R_j^g < R_j^*$. Hence,
\begin{equation}\label{eq:delta j and j}
\Delta \sigma(R_j^t) \geq \Delta \sigma(R_j^g) \geq \Delta \sigma(R_j^*).
\end{equation}
From \eqref{eq:delta i and j} and \eqref{eq:delta j and j}, we have
\begin{equation}
\Delta \sigma(R_i^*) \geq \Delta \sigma(R_j^*).
\end{equation}
Thus, by removing a bit from $R_j^*$ and adding it to $R_i^*$ we obtain a new solution $S_{new}$ which its error value is less than or equal to error of $S_{opt}$. Since $S_{opt}$ is optimal, $S_{new}$ could not have smaller error value. But its distance with the greedy solution is $r_{min} - 2$ which is a contradiction.

The proposed per-symbol coding scheme has \order($R$) time complexity for encoding (using binary search algorithm over the codebook) and \order($1$) for decoding. However, its space complexity is \order($2^R$) that is required for storing the bins intervals and centroids (codebook). This method requires to transmit matrices $Q_x$ and $Q_y$ between machine \machine{x} and machine \machine{y}. Thus, the overall communication cost is \order($2 d^2 + R n)$ bits.

Although in the proposed per-symbol method the underlying distribution of samples of $X$ is considered to be normal, we can use it for other distributions. However, the normal distribution is used to make the dimensions independent and to prove the optimality of the greedy bit loading.  Experimental results in Section \ref{sec:experiments} reveal that the performance of this method for other distributions is desirable.

\subsection{Dimension Reduction} \label{sec:dimension reduction}
The third approach proposed in this paper for reducing communication cost for transmitting the dataset $X$ is to represent them in a lower dimensional space. In this section we propose a linear dimension reduction method that efficiently minimizes the inner product distortion. 
Interestingly, PCA, the conventional dimension reduction method, falls from optimality as it is shown in Section \ref{sec:experiments}.


Our goal is to construct a lower dimensional vector $\hat{x}_i$ for $x_i$ so that the error function \eqref{eq:err-func} be minimized.
In order to represent $x_i$ by $\hat{x}_i$ which lies in a lower dimensional space, $\hat{x}_i$ is defined as a weighted sum of some basis vectors as follows
\begin{equation}
	\label{eq:x_hat-non-matrix}
	\hat{x}_i = \sum_{k=1}^{m} z_{ik}u_k,
\end{equation}
where $\{u_k\}$ are orthogonal $d$-dimensional basis vectors so that $\Vert u_k \Vert ^ 2 = 1$, $z_{ik}$ is the weight of vector $u_k$ for $\hat{x}_i$, and $m$ is dimension of the new space. The equation \eqref{eq:x_hat-non-matrix} can be rewritten in vector-matrix form as 
\begin{equation} \label{eq:x_hat}
	\hat{x}_i = U z_i,
\end{equation}
where $z_i = [z_{i1}, \cdots, z_{i m}]^T$, and $U = [u_1, \cdots, u_m]$ is a $d \times m$ matrix that $u_k$ forms the $k$-th column, thus, $U^T U = I$. In fact, $z_i \in \real^m$ is the corresponding representation of $x_i$ in the $m$-dimensional space. We would like to minimize the error function \eqref{eq:err-func}, with respect to the matrix $U$ and vectors $z_i$, $i = 1, \cdots, n$.  More precisely, we want to find the solution of the following optimization problem:
\begin{equation} \label{eq:opt_dim_red}
	\begin{split}
		& \min_{z_1, \cdots, z_n, U} D	\\
		& \text{s.t.} 	\quad U^T U = I,
	\end{split}
\end{equation}
Following theorem gives the optimal values of this problem.
\begin{theorem} \label{thm:dim_red}
	The optimal basis vectors for minimizing $D$ in \eqref{eq:err-func} are the first $m$ right eigenvectors of matrix $S_x S_y$ corresponding to $m$ largest eigenvalues, where $S_x$ and $S_y$ are sample covariance matrices of $X$ and $Y$, respectively. The vector $z_i \in \mathbb{R}^m$ is obtained by
	\begin{equation}
		z_i = (U^T S_y U)^{-1} U^T S_y x_i, \qquad i = 1, \cdots, n,
	\end{equation}
	and the distortion value is the sum of $(d - m)$ smallest eigenvalues of $S_x S_y$.
\end{theorem}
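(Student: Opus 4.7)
My plan is to decouple the minimization into two stages: first fix the basis $U$ and optimize over $\{z_i\}$, then substitute back and optimize over $U$. In the first stage, $D$ is a strictly convex quadratic in each $z_i$, so setting $\partial D / \partial z_i = 0$ gives the normal equation $U^T S_y U\, z_i = U^T S_y x_i$, yielding the claimed formula $z_i^{*} = (U^T S_y U)^{-1} U^T S_y x_i$. Writing $P_U := U(U^T S_y U)^{-1} U^T S_y$, so that $U z_i^{*} = P_U x_i$, one verifies the algebraic identity $(I - P_U)^T S_y (I - P_U) = S_y - S_y P_U$; combining this with $\frac{1}{n}\sum_i x_i x_i^T = S_x$ I obtain
\begin{equation*}
D(U) = \tr(S_x S_y) - \tr\!\left((U^T S_y U)^{-1} U^T S_y S_x S_y U\right).
\end{equation*}

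For the second stage I change variables via $V = S_y^{1/2} U$, which reduces the problem to maximizing $\tr\!\left((V^T V)^{-1} V^T M V\right)$ with $M := S_y^{1/2} S_x S_y^{1/2}$ over $d \times m$ matrices $V$ of full column rank. The objective depends only on the column space of $V$, so the original constraint $U^T U = I$ is a harmless choice of basis rather than an active restriction. By Ky Fan's trace maximization theorem (equivalently, Rayleigh--Ritz applied to subspaces), the supremum equals $\sum_{i=1}^{m} \mu_i$, where $\mu_1 \ge \cdots \ge \mu_d$ are the eigenvalues of the symmetric matrix $M$, and is attained precisely when the columns of $V$ span the top-$m$ invariant subspace of $M$.

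To translate back to the original variables, observe that $S_x S_y = S_y^{-1/2} M S_y^{1/2}$, so $M$ and $S_x S_y$ are similar and share eigenvalues $\mu_i = \lambda_i$; moreover, $M v = \lambda v$ is equivalent to $S_x S_y (S_y^{-1/2} v) = \lambda (S_y^{-1/2} v)$. Hence the optimal $U = S_y^{-1/2} V$ spans the top-$m$ right eigenspace of $S_x S_y$, and after orthonormalization (which preserves column space and hence the optimum) satisfies the statement of the theorem. Substituting this $U$ back gives the minimum distortion
\begin{equation*}
D^{*} = \tr(S_x S_y) - \sum_{i=1}^{m} \lambda_i = \sum_{i=m+1}^{d} \lambda_i,
\end{equation*}
the sum of the $(d-m)$ smallest eigenvalues of $S_x S_y$. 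The principal obstacle is the trace maximization step: one must carefully confirm that the objective depends only on the column space of $V$ (so that $U^T U = I$ imposes no genuine loss), invoke Ky Fan's inequality with its tightness characterization to isolate the top-$m$ invariant subspace, and finally relate the Euclidean-orthonormal eigenvectors of the symmetric $M$ back to the (non-Euclidean-orthonormal) right eigenvectors of the non-symmetric $S_x S_y$ via the similarity $S_y^{-1/2}(\cdot)S_y^{1/2}$.
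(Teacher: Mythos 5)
Your first stage (optimizing over $z_i$ for fixed $U$ and substituting back to get $D(U)=\tr(S_xS_y)-\tr\left((U^TS_yU)^{-1}U^TS_yS_xS_yU\right)$) coincides exactly with the paper's. Where you diverge is the second stage: the paper forms a Lagrangian for the constraint $U^TU=I$, derives the stationarity condition $S_xS_yU-UAU^TS_yS_xS_yU=0$, and then merely \emph{verifies} that $S_xS_yU=U\Lambda$ satisfies it, picking the $m$ largest eigenvalues at the end; this only identifies stationary points and does not by itself certify a global minimum (and, strictly speaking, a Euclidean-orthonormal $U$ whose columns are individually eigenvectors of the non-symmetric $S_xS_y$ need not exist, a point the paper glosses over). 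Your route — substituting $V=S_y^{1/2}U$, observing that the objective $\tr\left((V^TV)^{-1}V^TMV\right)$ with $M=S_y^{1/2}S_xS_y^{1/2}$ depends only on the column space of $V$, and then invoking Ky Fan / Rayleigh--Ritz on the symmetric $M$ — buys genuine global optimality plus a clean handling of the fact that the right eigenvectors of $S_xS_y$ are $S_y$-orthogonal rather than Euclidean-orthogonal, so the orthonormal optimizer spans the top-$m$ right eigenspace rather than literally consisting of eigenvectors. The cost is the reliance on $S_y\succ 0$ for the change of variables (the paper implicitly needs $S_y$ invertible too, in its step multiplying by $S_y^{-1}$) and on an external trace-maximization theorem, whereas the paper's calculation is self-contained first-order calculus. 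Both yield the same optimal $z_i$, the same eigenspace, and the same residual distortion $\sum_{i=m+1}^{d}\lambda_i$; yours is the tighter argument.
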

\begin{proof}
	Substituting \eqref{eq:x_hat} in \eqref{eq:err-func} yields
	\begin{equation}\label{eq:D(U, z)}
		D = \tr(S_x S_y) + \frac{1}{n} \sum_{i=1}^{n} \left(z_i^T U^T S_y U z_i - 2 x_i^T S_y U z_i \right).
	\end{equation}
	We first optimize $D$ with respect to $z_i$ considering $U$ as a constant matrix. To this end, by taking the derivative with respect to $z_i$ and setting it to zero, we obtain
	\begin{equation} \label{eq:z_opt_value}
		z_i = A U^T S_y x_i, \qquad i \in \{1, \cdots, n\},
	\end{equation}
	where
	\begin{equation} \label{eq: A}
		A = (U^T S_y U)^{-1}.
	\end{equation}
	By substituting \eqref{eq:z_opt_value} in \eqref{eq:D(U, z)}, we obtain the error function $D$ as a function of $U$ as 
	\begin{equation} \label{eq:D(U)}
		D = \tr(S_x S_y) - \tr(A U^T S_y S_x S_y U).
	\end{equation}
	The first term in \eqref{eq:D(U)} is constant and can be ignored. Next, we obtain $U$ by introducing the Lagrange multiplier $B$ as follows:
	\begin{equation}\label{eq:lagrange_U}
		\mathcal{L} = \tr\left(A U^T S_y S_x S_y U\right) + \tr\left(B U^T U\right),
	\end{equation}
	where $B$ is an $m \times m$ symmetric matrix that comes from the constraint $U^T U = I$.
	Taking derivative from \eqref{eq:lagrange_U} with respect to $U$, yields
	\begin{equation}\label{eq:D_der_U}
		\frac{\partial D}{\partial U} = 2 S_y S_x S_y U A - 2 S_y U A U^T S_y S_x S_y U A + 2 U B.
	\end{equation}
	By setting it to zero and multiplying by $U^T$ from the left, yields $B = 0$. Hence, we have
	\begin{equation}
		2 S_y S_x S_y U A - 2 S_y U A U^T S_y S_x S_y U A = 0.
	\end{equation}
	Multiplying by $S_y^{-1}$ and $A^{-1}$ from left and right, respectively, yields
	\begin{equation} \label{eq:temp 3}
		S_x S_y U - U A U^T S_y S_x S_y U = 0.
	\end{equation}
	It is easy to show that replacing $S_x S_y U$ with $U \Lambda$ where $\Lambda$ is a diagonal matrix which includes $m$ eigenvalues of $S_x S_y$, satisfies \eqref{eq:temp 3}. In other words, if columns of $U$ are the right eigenvectors of $S_x S_y$, then it satisfies \eqref{eq:temp 3}. Thus, we would like to solve the following eigenvector problem:
	\begin{equation} \label{eq:U_solution}
		S_x S_y U = U \Lambda.
	\end{equation}
	By substituting \eqref{eq:U_solution} in \eqref{eq:D(U)}, we obtain
	\begin{equation}
		D = \tr(S_x S_y) - \tr(\Lambda).
	\end{equation}
	Clearly for minimizing $D$, the trace of $\Lambda$ should be maximized, thus, the $m$ right eigenvectors of $S_x S_y$ that correspond to the $m$ largest eigenvalues must be selected as the columns of $U$.
\end{proof}
The above theorem shows that the optimal dimension reduction can be obtained from factorization of $S_x S_y$. Therefore, $S_y$ is first transmitted to machine \machine{x}.
Transmitting $S_y$ from machine \machine{y} requires communicating \order($d^2$) bits. After dimension reduction, transmitting $\{z_i\}$ and $U$ to the machine \machine{y} requires \order($m n + m d$) bits, thus, the overall communication cost is \order($d^2 + m n + m d$) bits.
Theorem \ref{thm:dim_red} shows that if the dimensions in $Y$ dataset are highly correlated, then the error value is significantly lower than PCA and conversely for loosely correlated dimensions the method is close to the PCA so that if $S_y = I$ then it will be completely equivalent to PCA.

One more interesting observation is that the solution to dimension reduction in Theorem \ref{thm:dim_red} is very similar to that of Theorem \ref{thm:compression} in the sense that both rely on $S_x S_y$ (or $Q_x Q_y$) for choosing their best strategy.

It is worth mentioning that if $S_x \simeq S_y$  (e.g. $X$ and $Y$ have identical distributions) there is no need to transmitting $S_y$, thus the communication cost reduces by \order($d^2$) bits. In this case, the machine \machine{x} can perform the proposed dimension reduction by estimating $\hat{S}_y = S_x$. But in this case, our proposed method is equivalent to the PCA. Because, eigenvectors of $S_x^2$ are identical to that of $S_x$ and the eigenvalues are squared. Since $S_x$ is positive definite, order of eigenvalues for $S_x$ and $S_x^2$ are identical. This shows that our proposed method for dimension reduction outperforms the PCA when the local datasets on the machines differ from each other. This will be shown experimentally in Section \ref{sec:experiments}.



\section{Distributed Learning of GPs} \label{sec:distr gp}
In this section, we make use of  the three methods proposed in Section \ref{sec:trans schemes} to learn  GP models over distributed datasets. Two communication models are considered: \emph{single-center} and \emph{broadcast} models.  In the first model, we assume that there is a centeral machine in the network which is responsible for learning the final GP model. In the second model, we assume that each machine in the network has a local GP model which is obtained from local dataset and the received data from all other machines. In this model, we also assume that each machine broadcasts its encoded data and all the machines receive the transmitted data.  

Suppose the dataset $\set{D} = \cup_{i=1}^m X_i$ is stored in $m$ machines \machine{1}, $\cdots$, \machine{m} such that $X_i$ is stored in machine \machine{i}. For convenience and without loss of generality, we have assumed that each local dataset $X_i$ is of size $n$. Hence, $\set{D}$ includes $N = m n$ observations in total. 
%
%
To learn a GP model in distributed systems, we first estimate the gram matrix which can be partitioned as follows:
\begin{equation} \label{eq:gram-mat-structure}
	G = 
	\begin{bmatrix}
		G_{11} & G_{1 2} & \cdots & G_{1 m}	\\
		G_{21} & G_{2 2} &\cdots & G_{2 m} \\
		& & \ddots & \\
		G_{m1} & G_{m 2 }& \cdots & G_{m m}
	\end{bmatrix}.
\end{equation}
Clearly, \machine{i} can calculate the gram matrix of its local dataset which is the sub-matrix $G_{ii}$ in \eqref{eq:gram-mat-structure}. However, the other parts of $G$ should be calculated cooperatively among the machines. In the PoE-based models (BCM and rBCM) the off-diagonal sub-matrices are considered to be zero. In other words, they assume that the marginal likelihood $P(\vect{y} | X, \theta)$ can be factorized as a product of local marginal likelihoods. The assumption of independence is not valid in many practical situations. Therefore, we aim at learning the full $G$ with the help of data communication between machines. 

We propose an algorithm based on Nystr\"{o}m method for gram matrix approximation. 
%
%
In this approximation, it is assumed that only $K < N$ rows (columns) of the gram matrix is given and the other rows (columns) must be approximated. Without loss of generality, we assume that the first $K$ rows of the gram matrix is available and the rest $N-K$ rows is approximated based on the first $K$ rows. Partitioning $G$ as 
\begin{equation}
	\begin{bmatrix}
		G_{K K} & G_{K (N-K)}	\\
		G_{(N-K) K} & G_{(N-K)(N-K)}
	\end{bmatrix}
\end{equation}
and denoting the first $K$ rows of $G$ by $G_{K N}$, the Nystr\"{o}m approximation of $G$ is defined as
\begin{equation}\label{eq:nystrom-approx}
	\hat G = G_{N K} G_{K K}^{-1} G_{K N},
\end{equation}
where $G_{N K}$ is the transpose of $G_{K N}$ \cite{rasmussen2006gaussian}.
 
It can be shown that the two matrices $G$ and $\hat G$ agree on the first $K$ rows. In fact, the error in estimating $G$ comes from  $\hat{G}_{(N-K) (N-K)}$. It can be shown that the error in $\hat{G}_{(N-K) (N-K)}$  is  the Schur complement of $G_{KK}$ \cite{rasmussen2006gaussian}. It worth mentioning that, Snelson and Ghahramani in \cite{snelson2005sparse} showed that the Nystr\"{o}m approximation can be improved by making it exact in the diagonal.

\subsection{Single-Center Model}
In the case of single-center model, we assume that one of the machines in the network, say \machine{1}, is the center node responsible for learning of the GP model. First,  the center machine sends its local covariance matrix to other machines. 
Having known the covariance matrix of the center machine, each machine uses one of the coding schemes proposed in Section \ref{sec:trans schemes} to transmit its distorted data to the center. Finally, the center machine calculates the total gram matrix and learns hyper-parameters of the model. 

For example, if \machine{1} is the center node, it calculates its local gram matrix $G_{11}$ and local sample covariance matrix $S_1$. Then, it sends $S_1$ to other machines and each machine encodes its local dataset and transmits it to \machine{1}. Machine \machine{1} uses the received distorted data to calculate the gram matrices $G_{12}, G_{13}, \cdots, G_{1m}$. Then, the full gram matrix $G$ is approximated by \eqref{eq:nystrom-approx}. Finally, using one of the gradient based optimization methods, the hyper-parameters of the model are learned, c.f. \cite{rasmussen2006gaussian}. The prediction procedure is also performed entirely at the center machine.

\subsection{Broadcast Model}
In broadcast networks, each machine sends its local covariance matrix to other machines. Then, each machine broadcasts its local dataset to all other machines incorporating the methods proposed in  Section \ref{sec:trans schemes}. In particular, at machine $i$, sum of the local covariance matrices of other machines (i.e. $\sum_{j = 1, j \neq i}^m S_j$) is used to obtain the best coding scheme for the proposed methods in Section \ref{sec:trans schemes}.

Despite the single-center model, the gram matrix is approximated locally  at each machine in the broadcasting model. Then, a predictive distribution can be obtained by combining these local GPs. In the broadcast model, the procedure that each machine performs is similar to the center machine in the single-center model.
For example, consider the machine \machine{1}; this machine can calculate its local gram matrix $G_{1 1}$. The sub-matrices $G_{1 2}, G_{1 3}, \cdots, G_{1 m}$ can be estimated using one of the methods presented in section \ref{sec:trans schemes}. Thus,  \machine{1} can construct the first $n$ rows of the gram matrix $G$ and obtain the whole matrix  by the Nystr\"{o}m approximation. The same procedure is done in other machines. After all, each machine has an approximate version of the full GP model. 

In the prediction phase, a new test observation is sent to all machines. Each machine obtains a normal predictive distribution parameterized as  $\mathcal{N}_i(\mu_i, \Sigma_i)$ for the target value of the input observation. Next, the local predictive distributions are aggregated to obtain the final predictive distribution at a fusion machine. The fusion machine can be one of the machines or a separate machine. It is also possible that all machines obtain the final predictive distribution if they access to all $\mu_i$'s and $\Sigma_i$'s. For merging local predictive distributions $\mathcal{N}_i(\mu_i, \Sigma_i)$ in the fusion machine we use the following objective function to obtain the final mean and covariance matrix:
\begin{equation} \label{eq:fusion obj func}
	(\mu^*, \Sigma^*) = \arg\min_{\mu, \Sigma} \sum_{i = 1}^{m}	 KL\bigg(\mathcal{N}(\mu_i, \Sigma_i) \parallel \mathcal{N}(\mu, \Sigma)\bigg).
\end{equation}
In \cite{liu2014distributed}, it is shown that the above objective function obtains the global maximum likelihood estimator for the situation in which $\mu_i$ and $\Sigma_i$ are maximum likelihood estimators of local dataset $X_i$. Although in our problem setting the $\mu_i$'s and $\Sigma_i$'s are not local maximum likelihood estimators, the objective function \eqref{eq:fusion obj func} intuitively is reasonable as a fusion criterion. However, by solving \eqref{eq:fusion obj func} we obtain the optimal values for $\mu$ and $\Sigma$ as follow
\begin{align}
	\mu^* &= \frac{1}{m} \sum_{i=1}^{m} \mu_i	,	\label{eq:mu star}	\\
	\Sigma^* &= \frac{1}{m} \sum_{i=1}^{m} \Sigma_i + (\mu^* - \mu_i) (\mu^* - \mu_i)^T \label{eq:cov star}.
\end{align}
Thus, each machine sends its calculated mean $\mu_i$ and covariance matrix $\Sigma_i$ to the fusion center, say machine \machine{1}, and it calculates the final predictive distribution using \eqref{eq:mu star} and \eqref{eq:cov star}.


\section{Experiments} \label{sec:experiments}
In this section, we evaluate our proposed coding schemes presented in Section \ref{sec:trans schemes} over synthetic and real datasets. After comparing the distortion of the coding schemes in recovering the inner products, we evaluate their performance in training GP regression models over distributed datasets.

\figurename{} \ref{fig:distortion-cc} shows the distortion value in \eqref{eq:err-func} against the communication rate for the  proposed methods in Section \ref{sec:trans schemes}. The dataset is generated synthetically by sampling from  a $20$-dimensional Gaussian distribution by a random covariance matrix. The dataset is distributed between two machines.
As can be seen, the distortion of the per-symbol compression scheme is close to the lower bound on the distortion which is  obtained by Theorem \ref{thm:compression}; and is significantly lower than the proposed dimension reduction method. For the distortion curve of the dimension reduction method, we have assumed that each dimension in lower space is represented by $16$ bits without any quantization distortion. Thus, the distortion for dimension reduction is only due to loss of information by the lost dimensions.  

From the experiment, it is evident that after expending $70$ bits per symbol or $3.5$ bits per dimension, the distortion is almost zero for the optimal distortion curve. This value becomes $5$ bits per dimension for the per-symbol compression scheme and $12.5$  bits for the dimension reduction scheme. 

\begin{figure}[tb]
	\centering
	\includegraphics[width=.8 \linewidth]{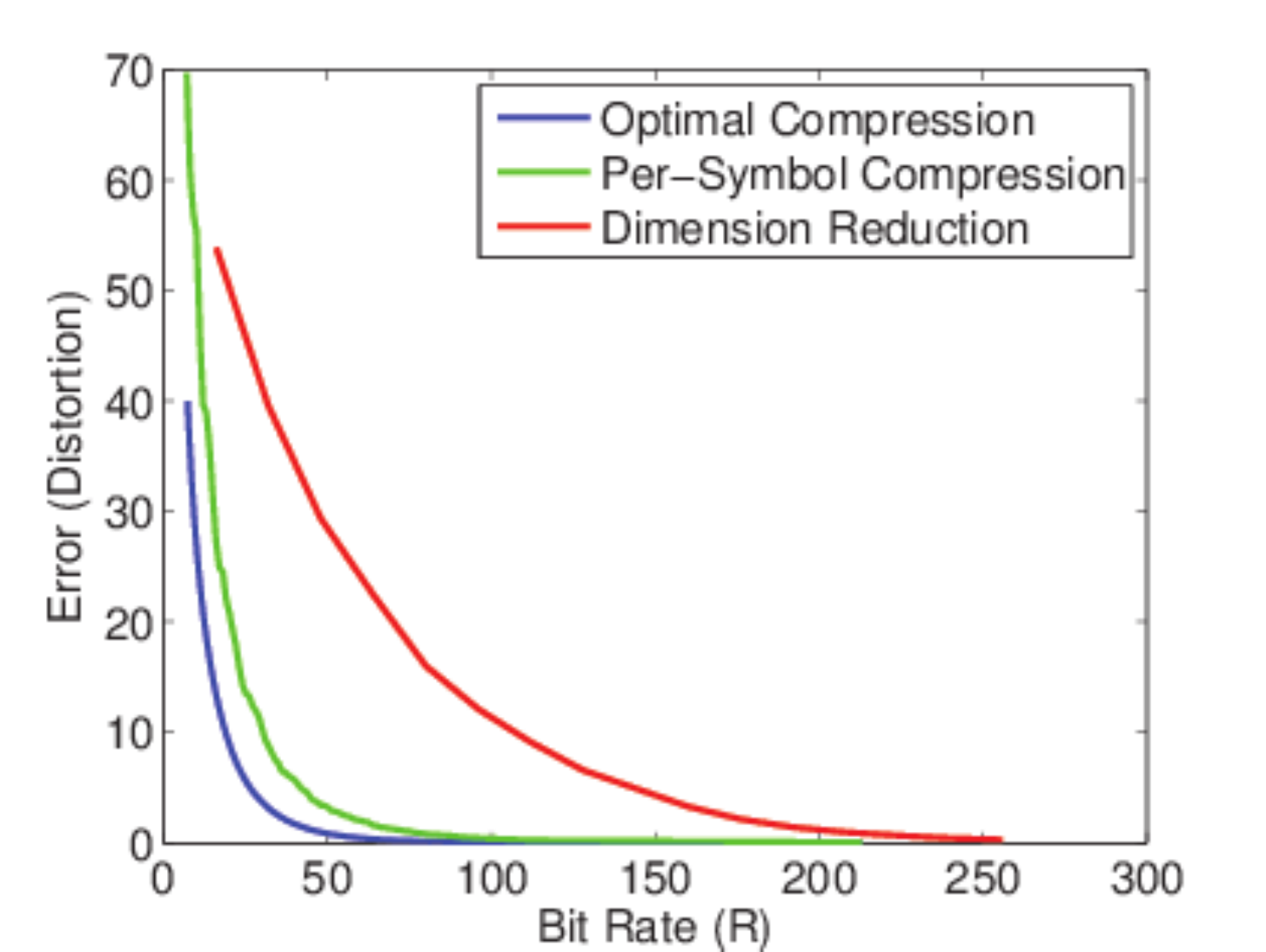}
	\caption{The distortion curve of the optimal (lower bound), per-symbol compression and the dimension reduction methods for $20$-dimensional Gaussian distribution.}
	\label{fig:distortion-cc}
\end{figure}

In \figurename{} \ref{fig:mnist-dim-red}, we have compared two different dimension reduction methods: our proposed scheme and the one based on PCA. In the first experiment, \figurename{} \ref{fig:gauss non-uniform dim red} we have generated two datasets $X$ and $Y$ from two multivariate Gaussian distributions with different covariance matrices. Whereas, in the second experiment, \figurename{} \ref{fig:gauss uniform dim red} we have drawn both datasets from single Gaussian distribution. In the first case, since the covariance matrices are different, our method has lower distortion than PCA. But in the second case, the two methods have approximately the same distortion values. The reason is that in this situation both local covariance matrices $S_x$ and $S_y$ are approximately equal and close to the true covariance matrix (specially for large datasets). Thus, the basis vectors obtained by PCA and our method are identical. However, when the sample covariance matrices $S_x$ and $S_y$ are far from each other, our method outperforms PCA. 

The comparison of the two schemes on real datasets is presented in \figurename{} \ref{fig:mnist non-uniform dim red} and \figurename{} \ref{fig:mnist uniform dim red} where the popular hand-written digits dataset known as MNIST is used. In the first experiment (\figurename{} \ref{fig:mnist non-uniform dim red}), the images of digits $6$ and $7$ are stored in the first and second machines, respectively. In the second experiment (\figurename{} \ref{fig:mnist uniform dim red}),  the images are split uniformly between two machines. \figurename{} \ref{fig:mnist non-uniform dim red} shows that under  the first condition our method outperforms PCA. On the other hand, \figurename{} \ref{fig:mnist uniform dim red} shows that under the second condition, the two methods have equal distortion.


\begin{figure}[t]
	\centering
	\subfloat[]{\includegraphics[width=.5\linewidth]{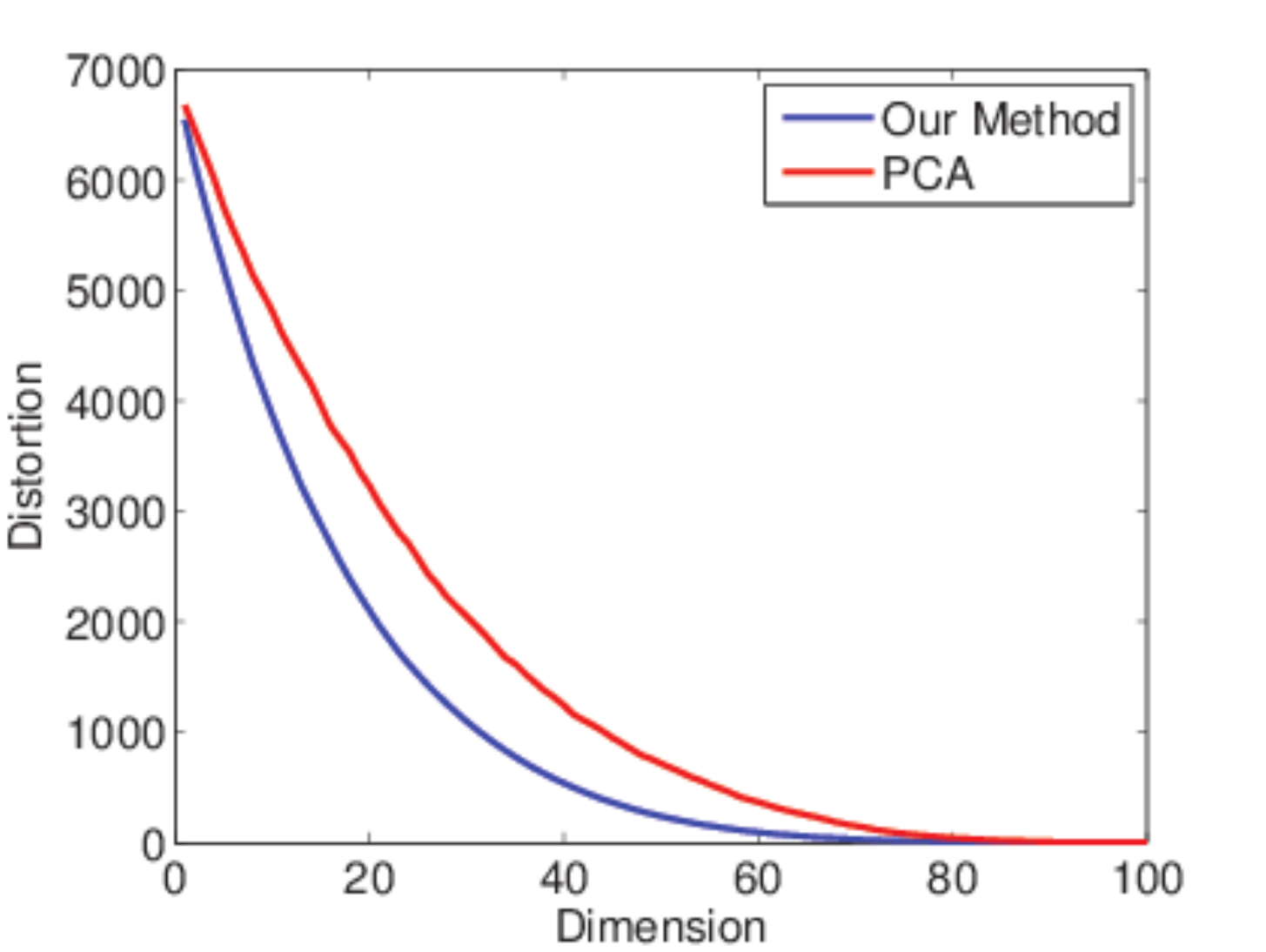} %
	\label{fig:gauss non-uniform dim red}}
	\subfloat[]{\includegraphics[width=.5\linewidth]{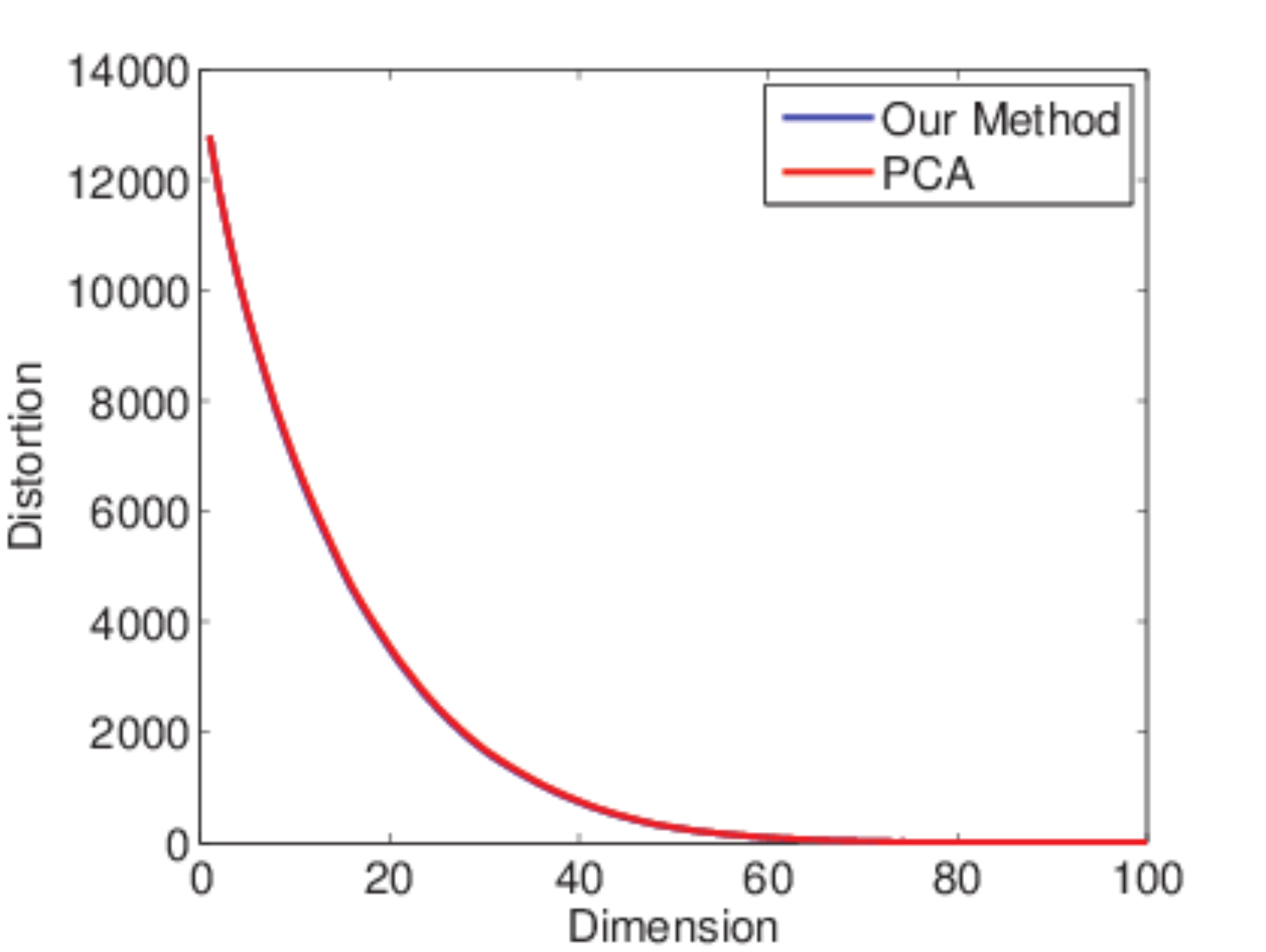} %
	\label{fig:gauss uniform dim red}}\\
	\subfloat[]{\includegraphics[width=.5\linewidth]{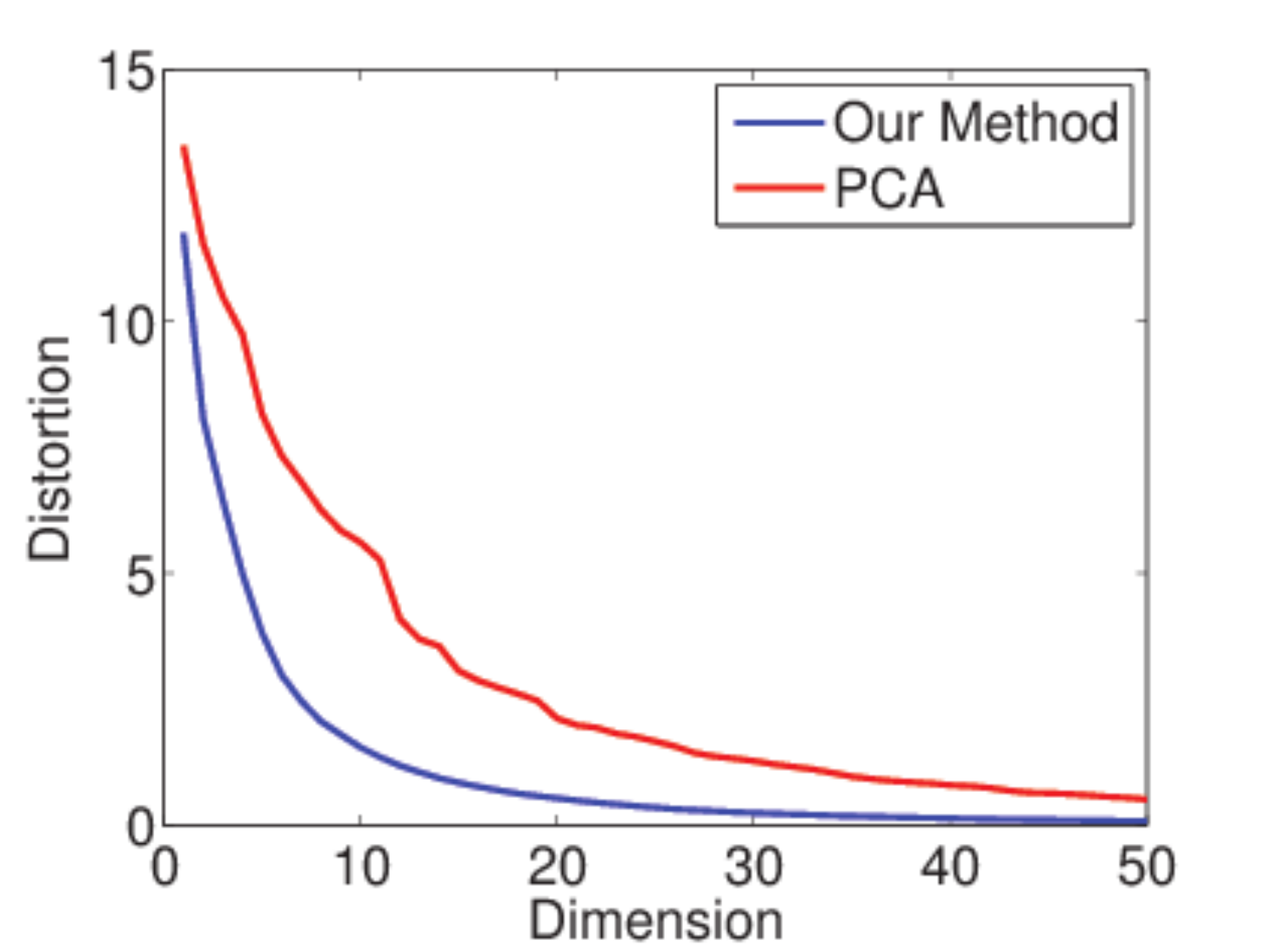} %
	\label{fig:mnist non-uniform dim red}}
	\subfloat[]{\includegraphics[width=.5\linewidth]{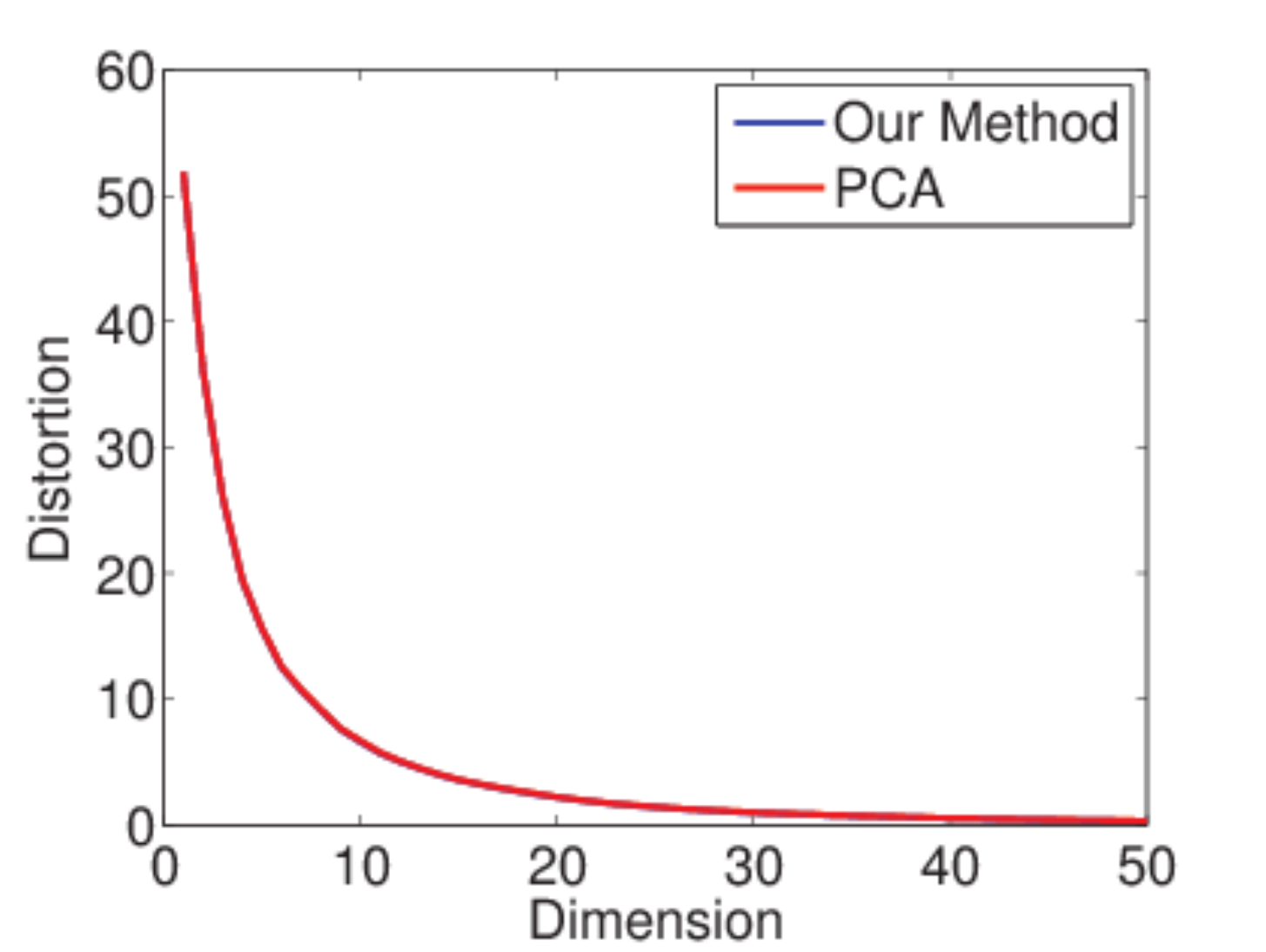} %
	\label{fig:mnist uniform dim red}}
	\caption{Comparison of distortions for PCA and the proposed dimension reduction. (a) Gaussian data with different covariance matrices in each machine. (b) Gaussian data with identical covariance matrices in the machines. (c) Images of MNIST for digit $6$ in the first machine and digit $7$ in the other. (d) Distributing images of MNIST for digits $6$ and $7$ uniformly between two machines.}
	\label{fig:mnist-dim-red}
\end{figure}

A simple $1$-dimensional database is used for evaluation of GP training on quantized datasets. \figurename{} \ref{fig:1d-gpr} shows the GP trained on a synthetic $1$-dimensional dataset of size $N = 200$. In this experiment, the whole dataset is quantized by rates $R=1$ to $R=8$ bits. The quantization method explained in Section \ref{sec:approx compression} is used for this experiment. In the case of $R = 1$, the true GP has a local maximum around $-5$, while the quantized GP has a local minimum at that region. This phenomenon is observed again around $5$ where true GP has local minimum while the quantized GP has local maximum. Thus, the divergence of posterior distribution of true and quantized GPs for $R=1$ is very high. Also as can be seen from \figurename{} \ref{subfig:1d-1bit} the standard deviation of quantized GP is greater than true GP in the region of data. This experiment shows that GP training using very low quality train data may not be favorable. 

In despite of the case $R = 1$, the quantized GP for $R=2$ is more close to the true GP. Here we have no reverse peaks between the true and quantized GPs. However, it seems that the divergence between the two GPs is relatively high. For rate $R = 3$, the mean function is fairly close to the true version, but the standard deviation in the region of training inputs is slightly larger than the true one. For rates $R \geq 6$, both mean and standard deviation of quantized GP is very close to the true GP. This experiment shows that by consuming a few bits for each data sample, we can reach to the result of full GP model, but if the bit rate is extremely low, say $1$ bit/sample in this experiment, the result may be unappealing.

\begin{figure*}[tb]
	\subfloat[$R=1$]{\includegraphics[width=.24\linewidth]{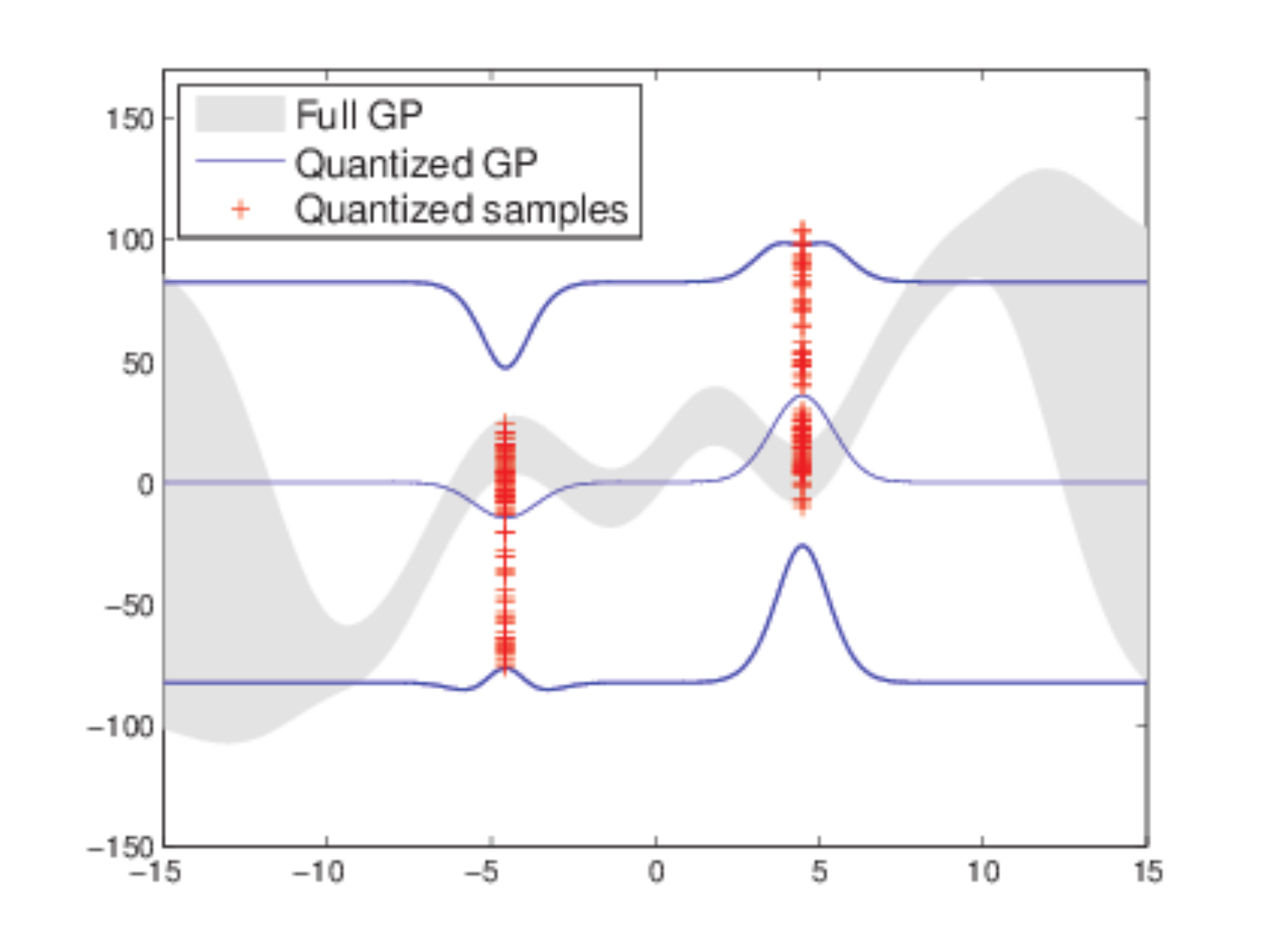}%
	\label{subfig:1d-1bit}}
	\hfil
	\subfloat[$R=2$]{\includegraphics[width=.24\linewidth]{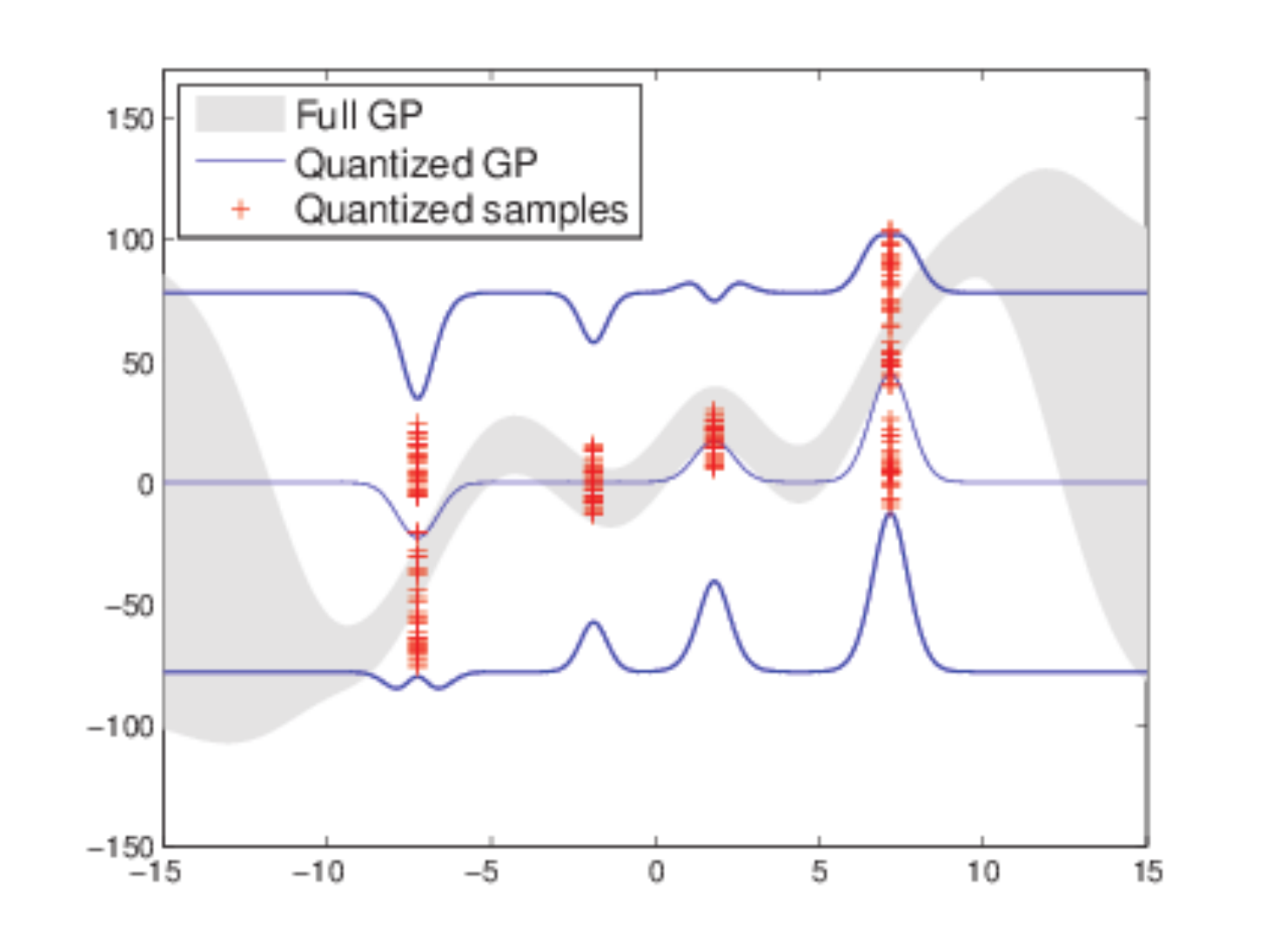}%
		\label{subfig:1d-2bit}}
	\hfil
	\subfloat[$R=3$]{\includegraphics[width=.24\linewidth]{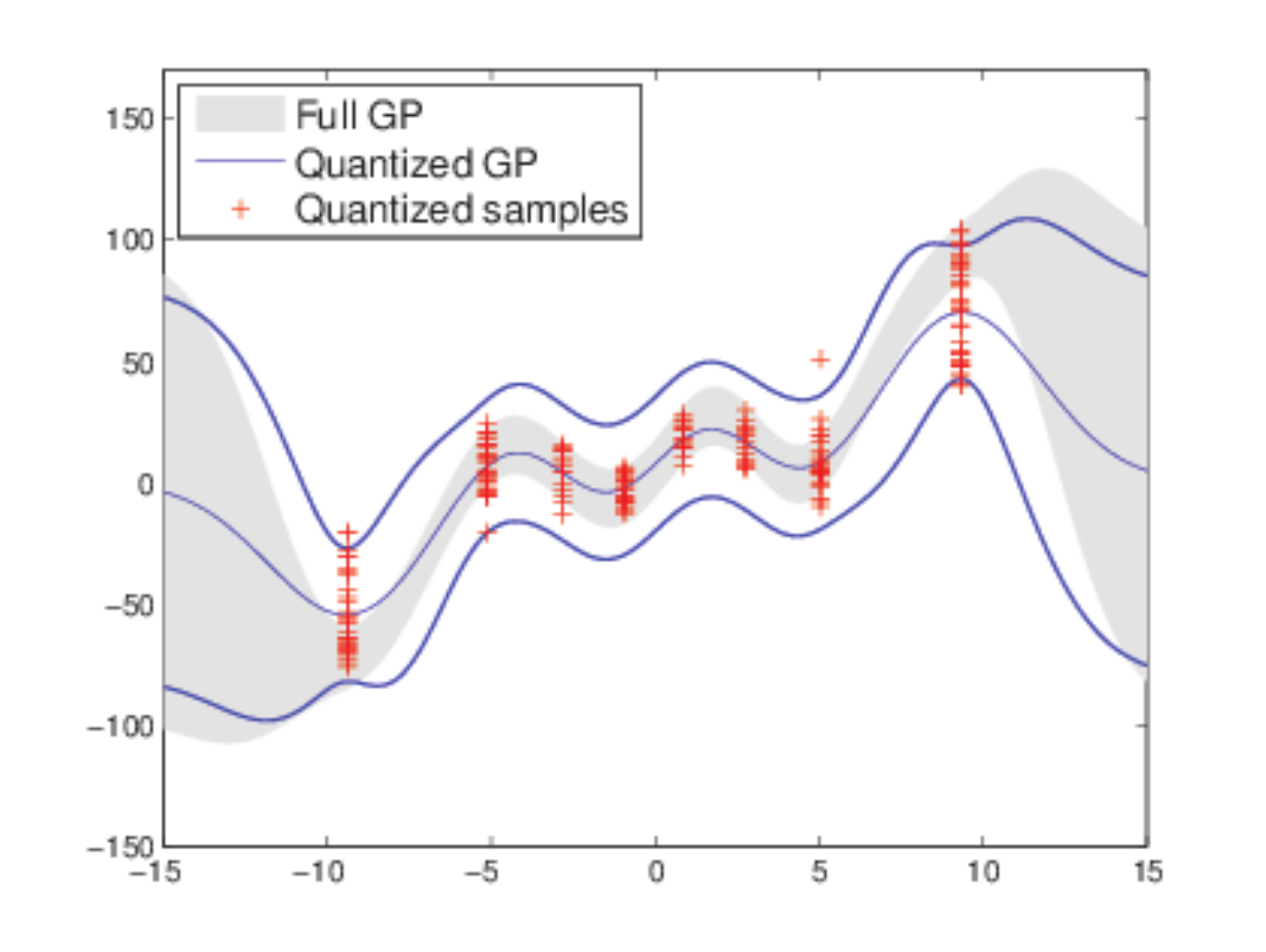}%
		\label{subfig:1d-3bit}}
	\hfil
	\subfloat[$R=4$]{\includegraphics[width=.24\linewidth]{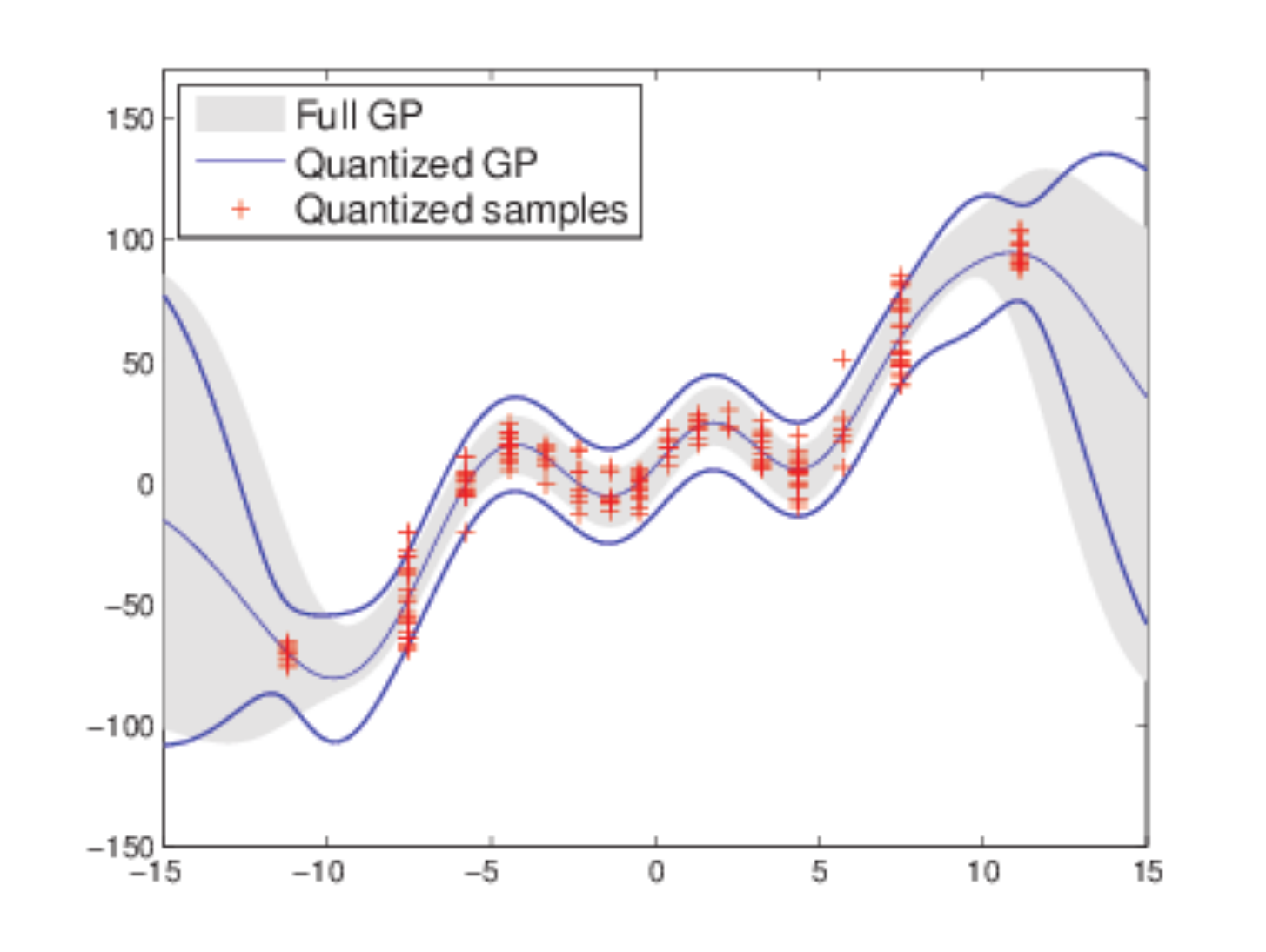}%
		\label{subfig:1d-4bit}}
	\hfil
	\subfloat[$R=5$]{\includegraphics[width=.24\linewidth]{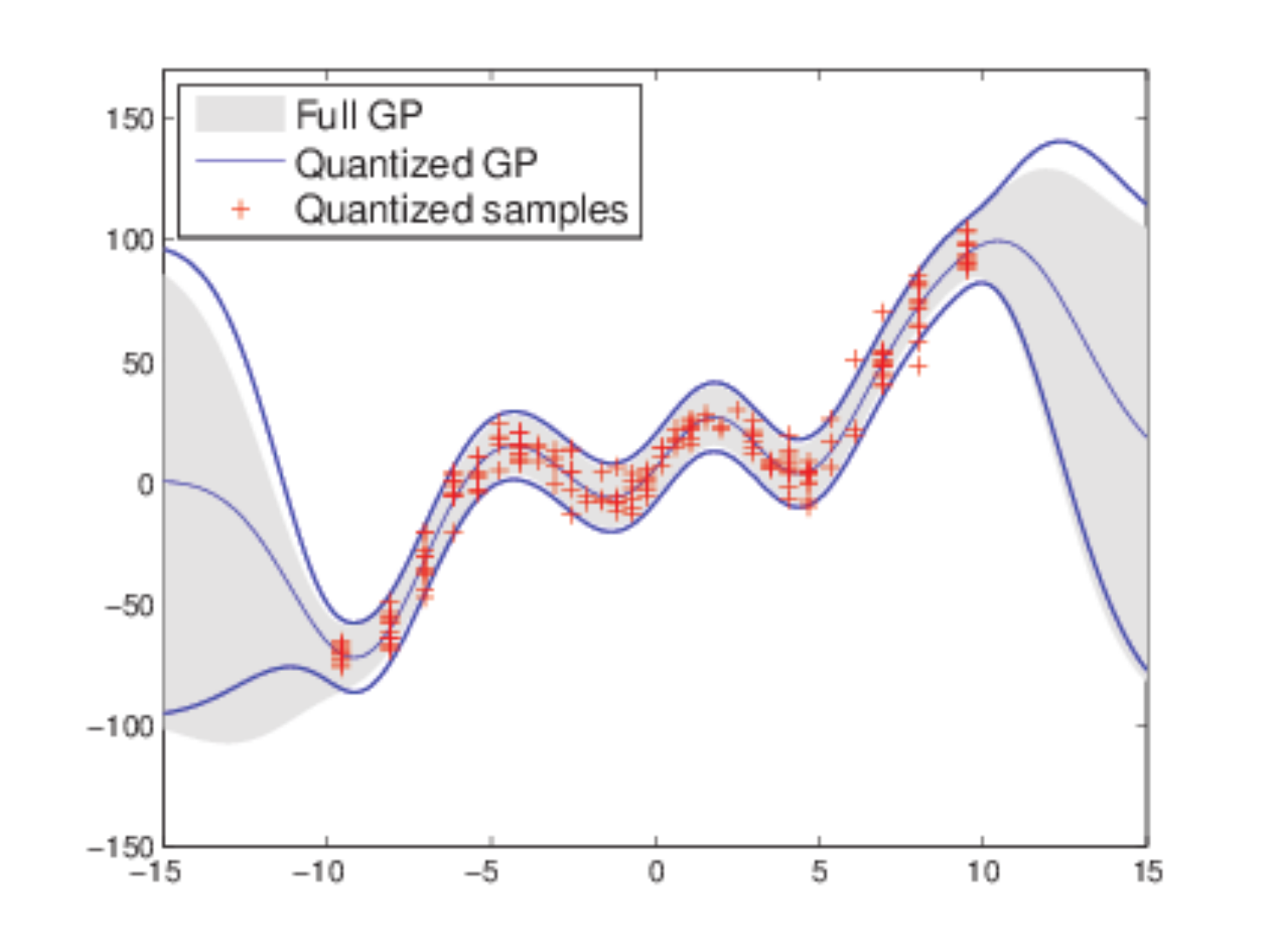} %
	\label{subfig:1d-5bit}}
	\hfil
	\subfloat[$R=6$]{\includegraphics[width=.24\linewidth]{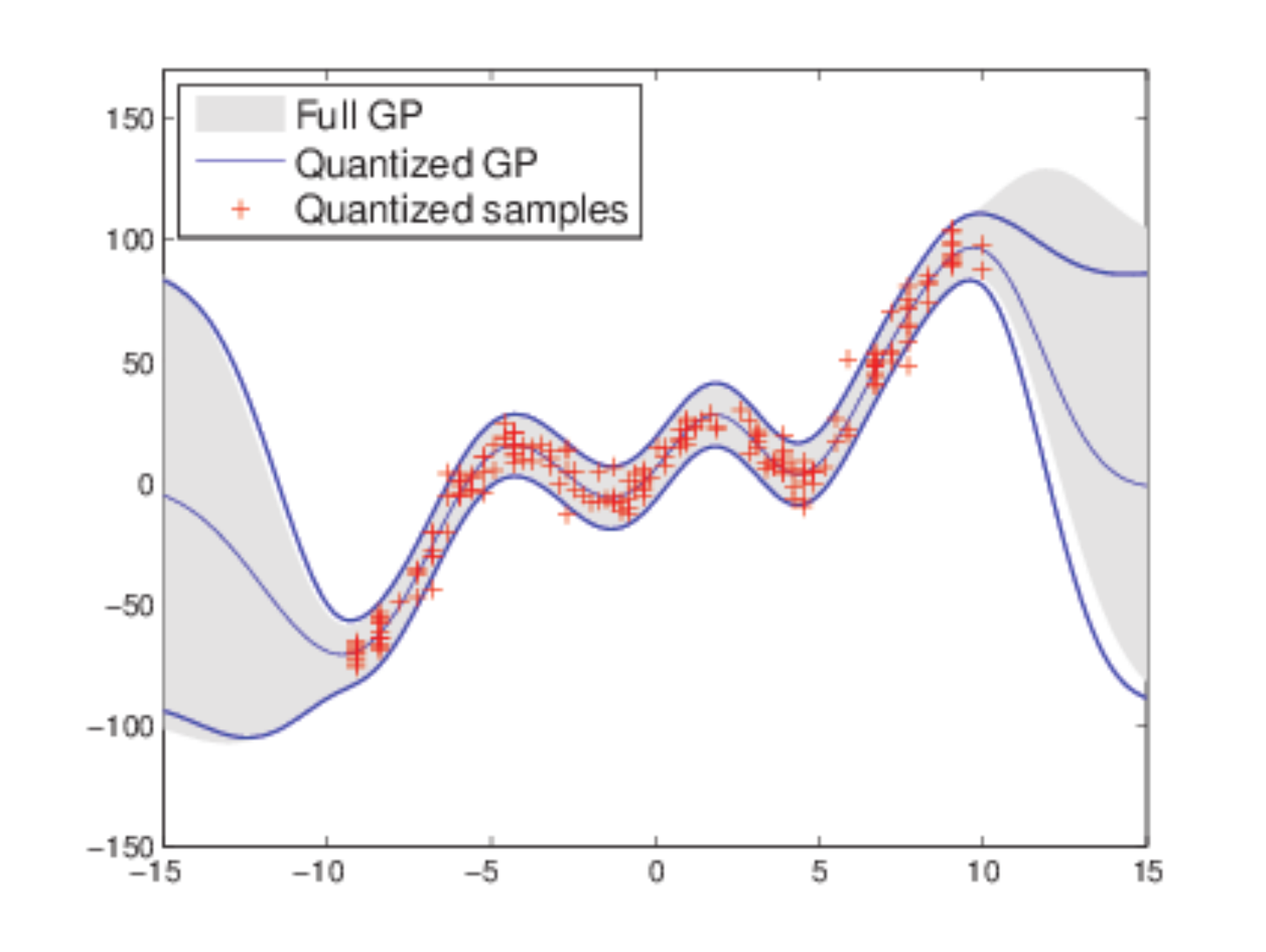}%
		\label{subfig:1d-6bit}}
	\hfil
	\subfloat[$R=7$]{\includegraphics[width=.24\linewidth]{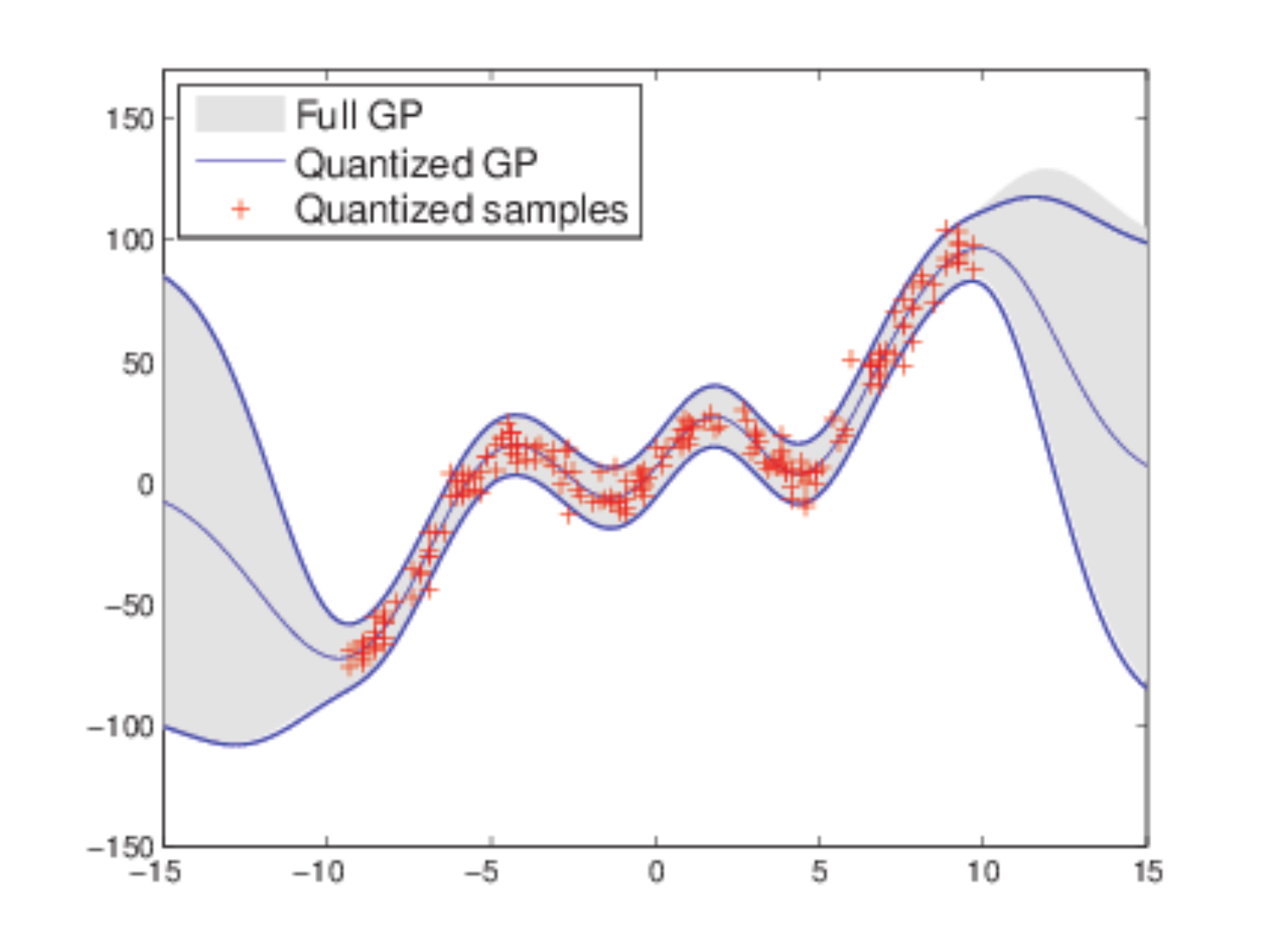}%
		\label{subfig:1d-7bit}}
	\hfil
	\subfloat[$R=8$]{\includegraphics[width=.24\linewidth]{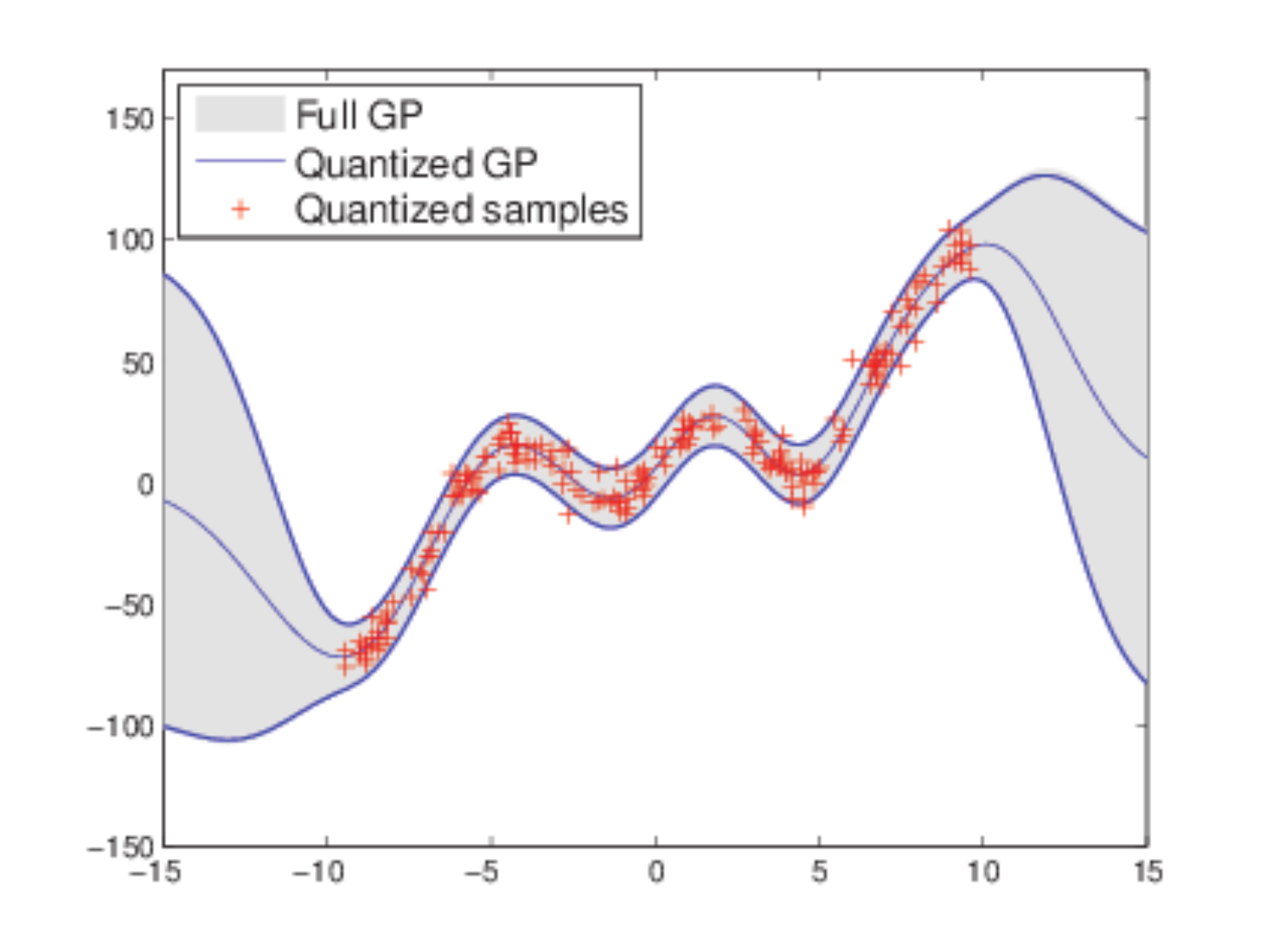}%
		\label{subfig:1d-8bit}}		
	\caption{Training of GP regression on a $1$-dimensional dataset for different bit rates: $R = 1,\cdots, 8$. Quantized inputs are shown by red $+$ markers.}
	\label{fig:1d-gpr}
\end{figure*}

In order to evaluate the trained GPs on quantized data we have used three popular real datasets: the SARCOS ($44,484$ training, $4,449$ test, 21 attributes), the KIN40K ($10,000$ training, $30,000$ test, 8 attributes), and the ABOLONE ($3,133$ training, $1,044$ test, 8 attributes). Learning  several regression models on these datasets has been studied in previous works such as \cite{titsias2009variational}. Since the whole datasets are too large to be trained, 1000 samples selected at random are used for the training GPs. To make a reference, a subset of unquantized data (SD) is used to train GPs at a center machine.  We refer GPs trained using SD method as full GP. The training dataset is randomly distributed across $40$ machines. Inputs are normalized to have  zero mean and unit variance on the training set.  Outputs (target values) are also centered around zero. We compare our method with previous distributed GP models that are based on PoE \cite{ng2014hierarchical} such as BCM \cite{tresp2000bayesian} and rBCM \cite{deisenroth2015distributed}.


\figurename{} \ref{fig:sarcos-abolone-linear-kernel} shows the error of regression for SARCOS and ABOLONE datasets. We have used the linear kernel function in \eqref{eq:linear-kernel} for our GP model.
The regression error is calculated using the standard mean squared error (SMSE) given by $\frac{1}{N} \frac{\Vert \vect{y}_* - \hat{\vect{y}}_* \Vert ^ 2}{var(\vect{y}_*)}$, where $\vect{y}_*$ and $\hat{\vect{y}}_*$ are the target and prediction vectors, respectively. As can be seen from \figurename{} \ref{fig:sarcos linear kernel} for SARCOS dataset the broadcast model crosses the rBCM error line at $R = 16$. Since the input dimension of SARCOS is $21$, by consuming about $0.8$ bit for each dimension, the broadcast model outperforms rBCM. The broadcast model is near the full GP around $40$ bits per sample or about 2 bits per dimension.

\figurename{} \ref{fig:abolone linear kernel} shows similar results on ABOLONE dataset. The broadcast model outperforms rBCM around $R = 16$, which leads to $2$ bits for each dimension. The broadcast model converges to the full GP around $50$ bits (i.e. about $6$ bits per dimension).

The GP model with linear kernel function in \eqref{eq:linear-kernel} does not converge for KIN40K dataset even for the full GP, thus, we could not plot its results.

\begin{figure}[t]
	\centering
	\subfloat[SARCOS]{\includegraphics[width=.5\linewidth]{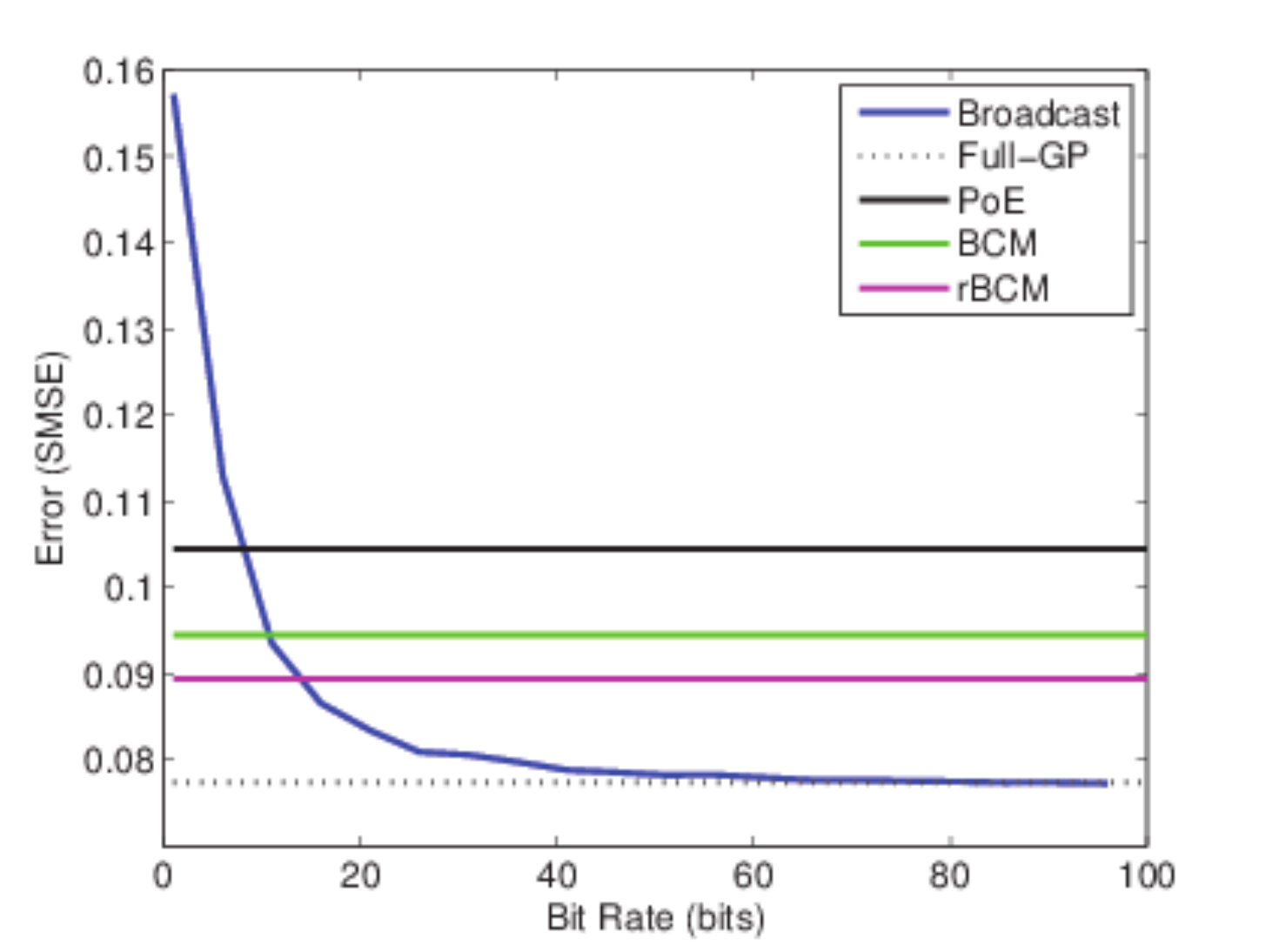} %
	\label{fig:sarcos linear kernel}}
	\subfloat[ABOLONE]{\includegraphics[width=.5\linewidth]{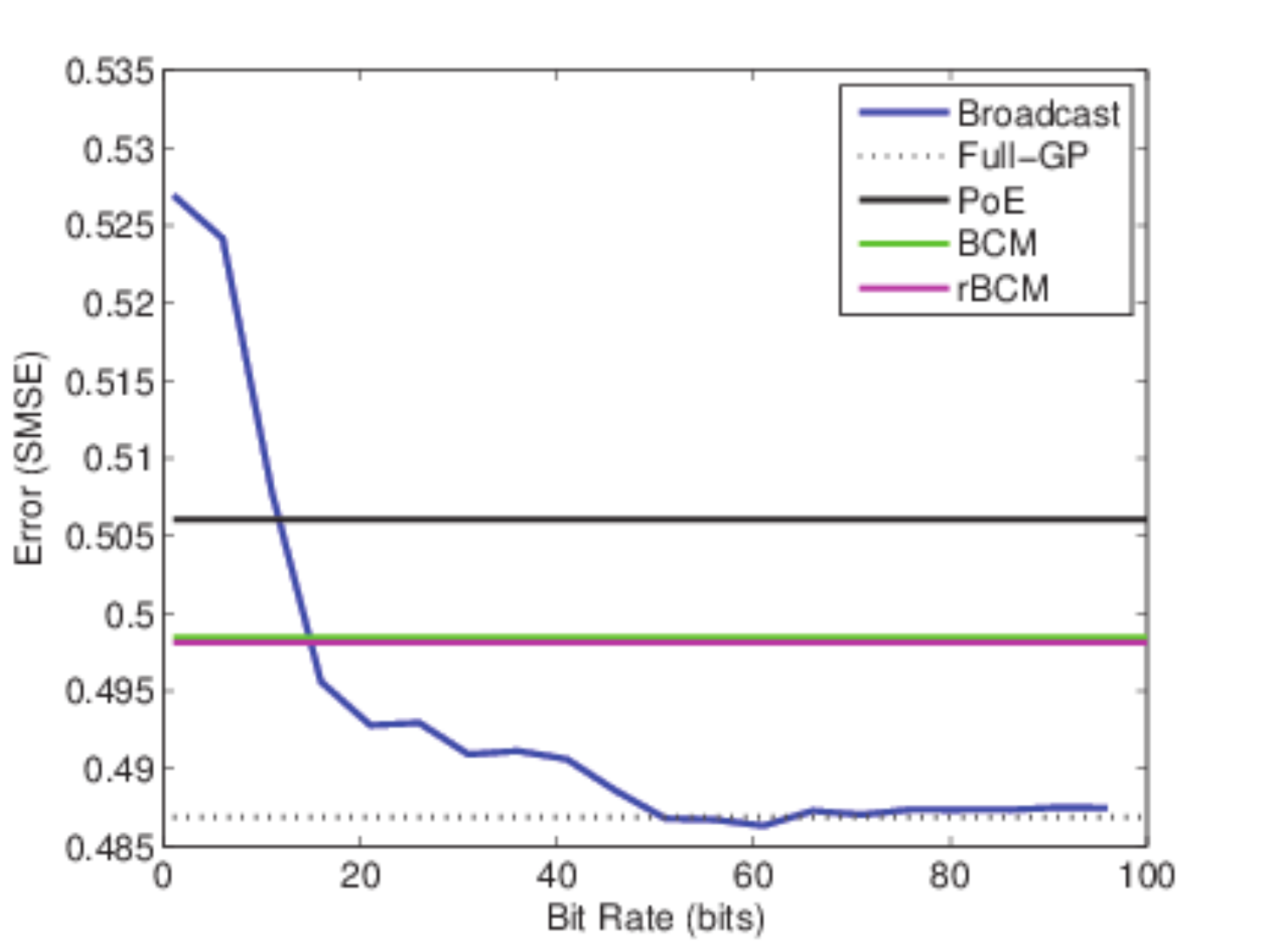} %
	\label{fig:abolone linear kernel}}
	\caption{Error of regression on SARCOS and ABOLONE datasets for GP model with linear kernel function in \eqref{eq:linear-kernel}.}
	\label{fig:sarcos-abolone-linear-kernel}
\end{figure}

\begin{figure*}[t]
	\centering
	\subfloat[SARCOS]{\includegraphics[width=.33\linewidth]{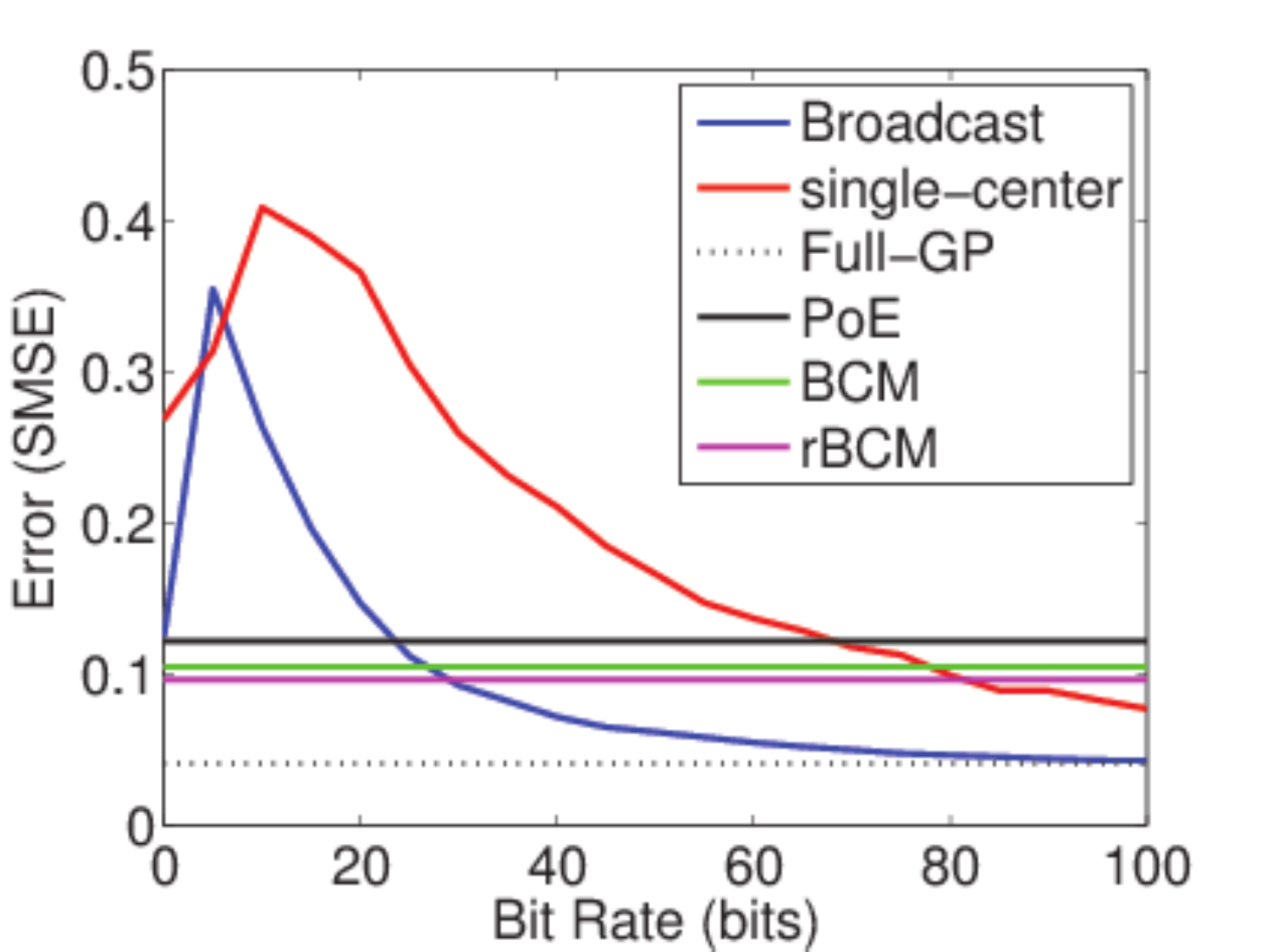} %
	\label{fig:sarcos SE kernel}}
	\subfloat[KIN40K]{\includegraphics[width=.33\linewidth]{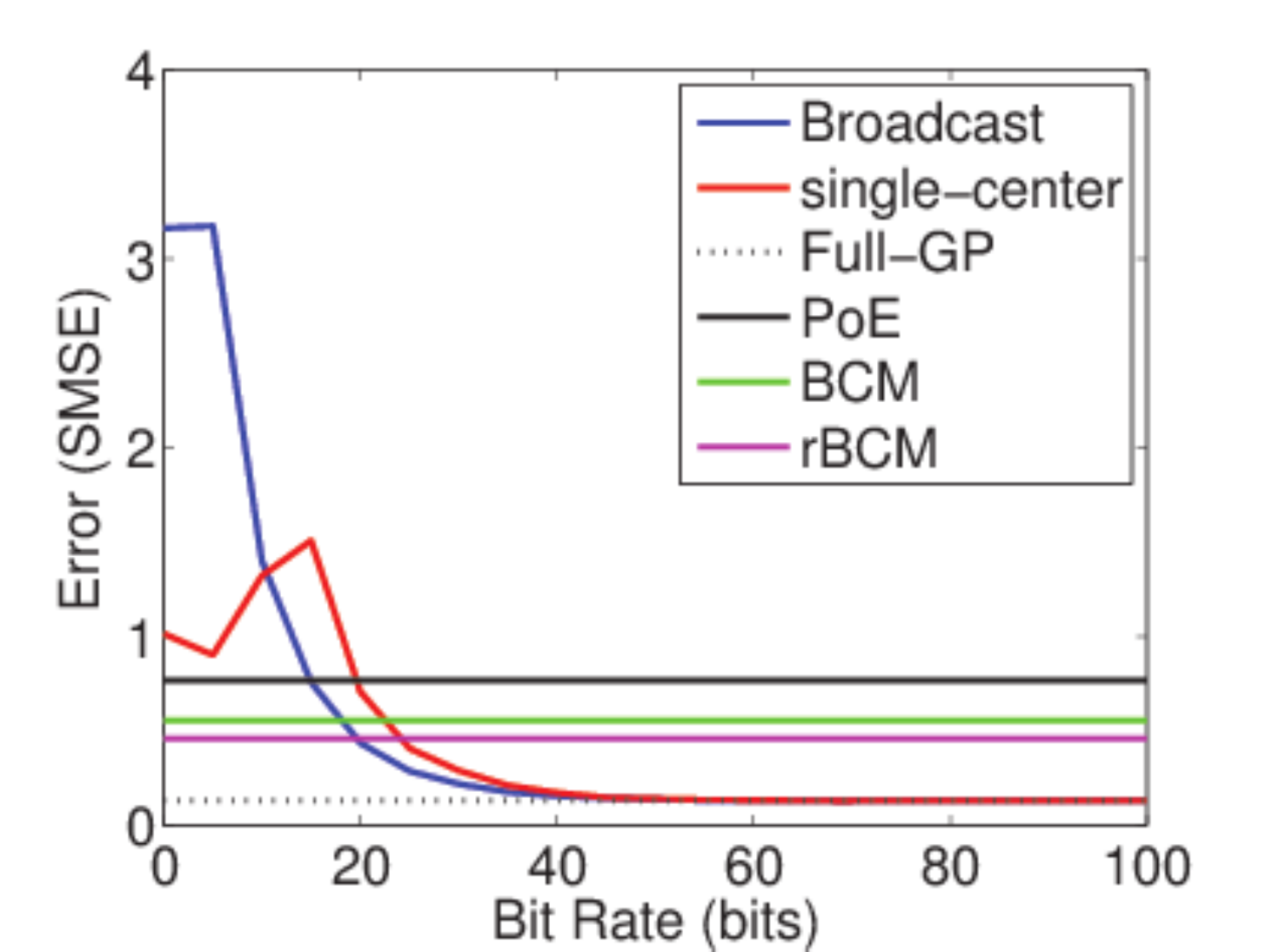} %
	\label{fig:kin40k SE kernel}}
	\subfloat[ABOLONE]{\includegraphics[width=.33\linewidth]{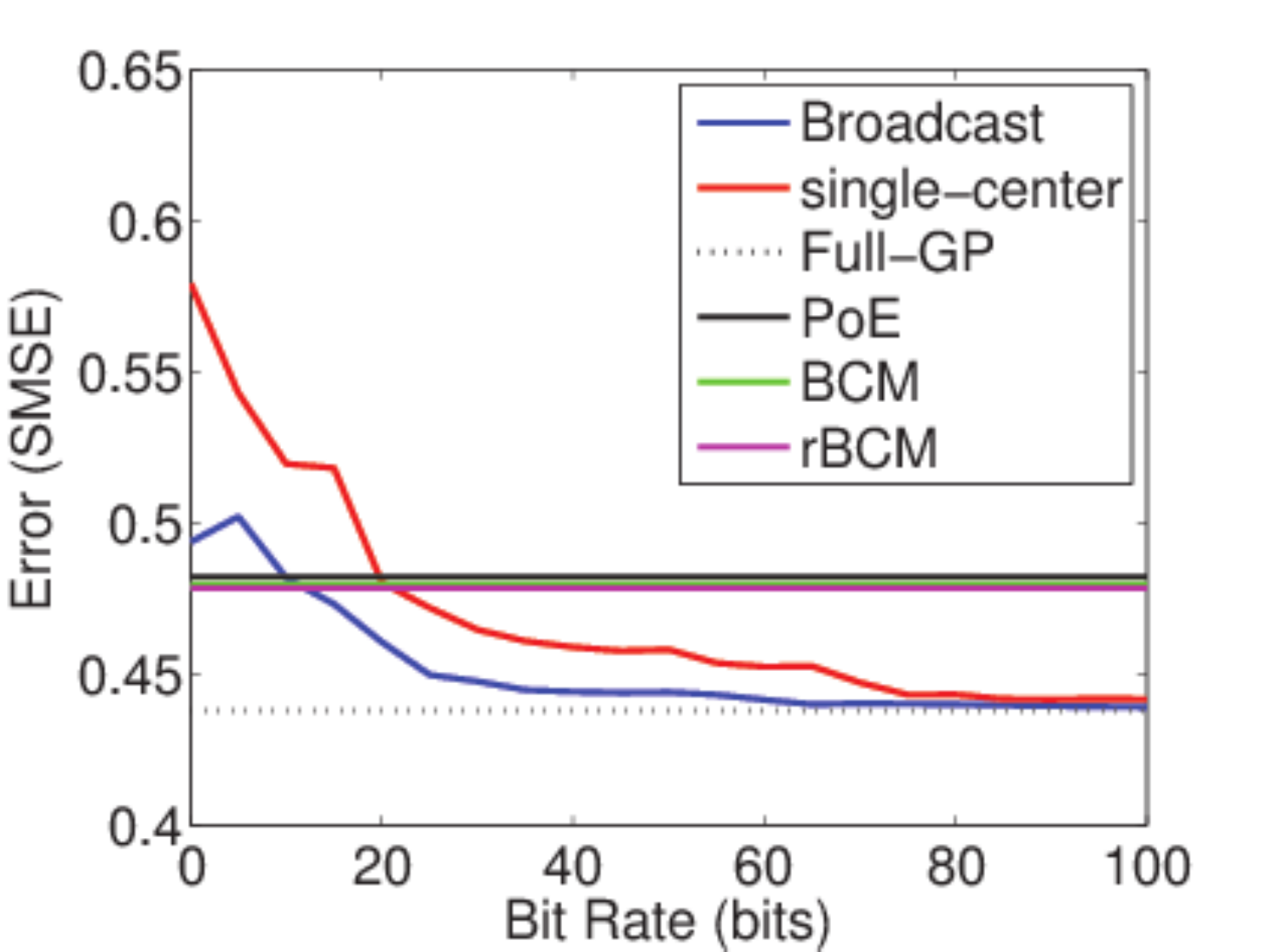} %
	\label{fig:abolone SE kernel}}
	\caption{Error of regression on SARCOS, KIN40K, and ABOLONE datasets for GP with squared exponential kernel function \eqref{eq:squared exp kernel}.}
	\label{fig:sarcos-kin40k-abolone}
\end{figure*}

\figurename{} \ref{fig:sarcos-kin40k-abolone} shows the result of a similar experiment on the three datasets for GP models with squared exponential kernel function:
\begin{equation} \label{eq:squared exp kernel}
k(\vect{x_p}, \vect{x_q}) = \sigma_s e^{-\frac{\Vert \vect{x_p} - \vect{x_q} \Vert^2}{\ell^2}},
\end{equation}
where $\sigma_s$ and $\ell$ are the hyper-parameters of the model. \figurename{} \ref{fig:sarcos SE kernel} shows that our methods outperforms rBCM after spending  about $25$ bits per sample  on SARCOS dataset in the broadcast GP. As mentioned before, the input dimension of SARCOS dataset is $21$. Hence,  each dimension is encode on the average by about $1$ bit.  For single-center GP, the error curve crosses the rBCM error line when about $4$ bits are used per dimension.

\figurename{} \ref{fig:kin40k SE kernel} shows the result of the same experiment for the KIN40K dataset. Here, for bit rate around $20$ we outperform both BCM and rBCM in the broadcast GP. The single-center GP has slightly more error values than the broadcast GP. The input space of the KIN40K dataset is $8$-dimensional, thus, our broadcast GP model requires on the average about $2.5$ bits per dimension to outperform both BCM and rBCM. The larger average bit rate shows that the attributes of KIN40K are more informative than that of SARCOS. The error curves of ABOLONE dataset are shown in \figurename{} \ref{fig:abolone SE kernel}. For the three datasets we see that the zero rate error for the broadcast model is smaller than the error at rate $R=5$. This shows that the highly distorted data could decrease the performance of the model.

The preceding experiments reveal that in case of highly limited communication capacity our scheme performs worse than the rBCM method where none of data points is transferred between the machines.  This suggests that when the communication is highly limiting, it is not a good strategy to transmit large number of data points with low qualities. It is better to transmit lower number of data samples with an acceptable quality. 


To overcome this issue, one idea is to use our quantization method on a subset of samples.
To select effective representatives of samples,  we have used the  method of \cite{titsias2009variational} in which inducing (variational) variables are found.  We, then,  quantize the inducing variables and transmit them to the center machine. In fact, each machine runs locally the method of Titsias \cite{titsias2009variational} to find inducing variables, then applies our proposed quantization methods on them. \figurename{} \ref{fig:sparse-kin40k} shows the error of single-center GP model which adopts this method for distributed GP learning on KIN40K dataset. As can be seen from the figure, the results are significantly better than non-sparse models in \figurename{} \ref{fig:sarcos-kin40k-abolone}.

It is also possible to find a global inducing set by communication between machines using method of \cite{gal2014distributed} which is based on an iterative algorithm. The communication cost of \cite{gal2014distributed} depends on the size of inducing set  as well as the rate of convergence and is relatively high. 

\begin{figure}[t]
	\centering
	\includegraphics[width=.9\linewidth]{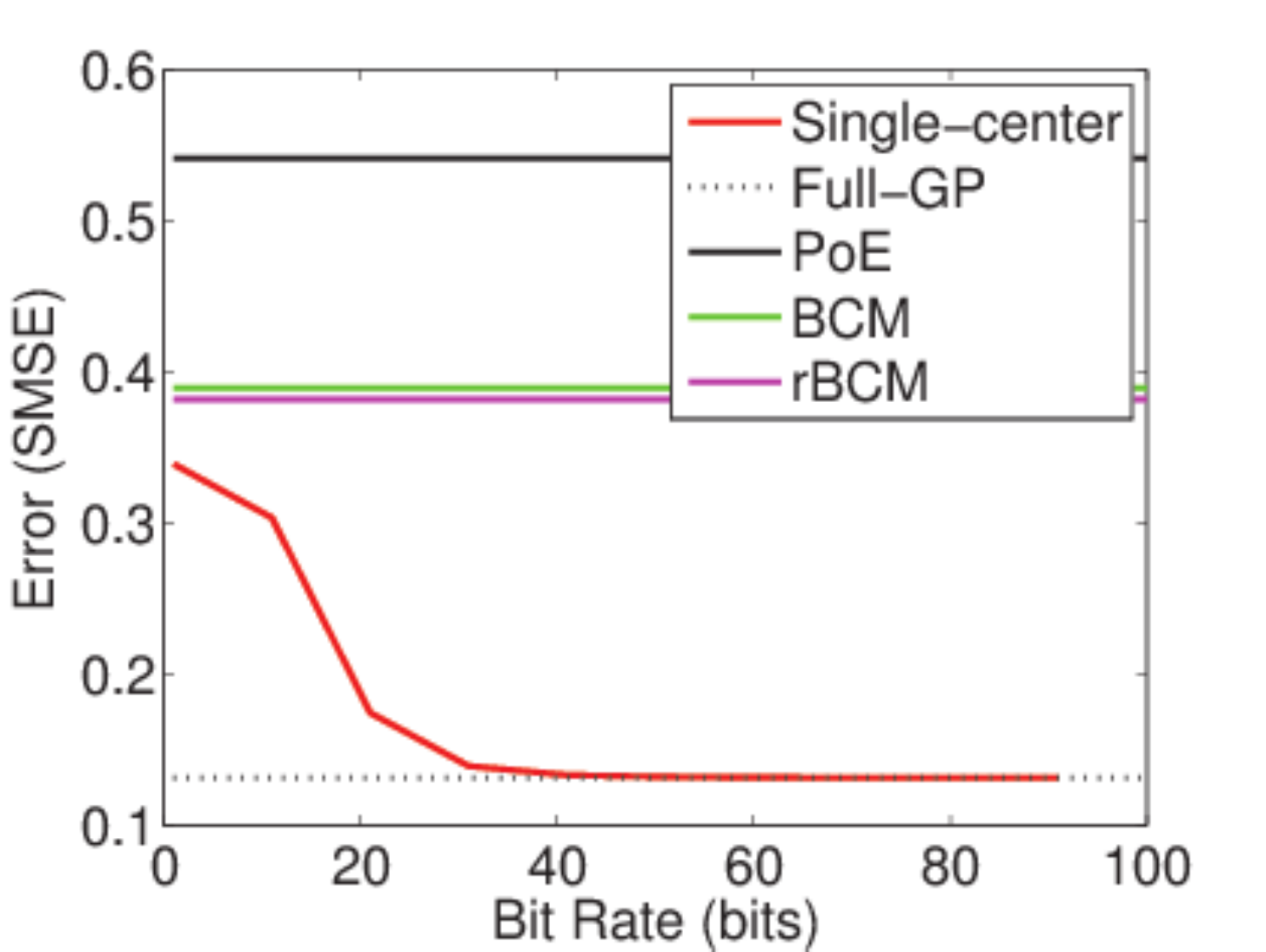}
	\caption{Error of regression for sparse GP with quantized inducing (variational) variables on KIN40K dataset.}
	\label{fig:sparse-kin40k}
\end{figure}

\section{Conclusion}
In this paper, we have proposed three methods for computing inner-products between Gaussian vectors with minimum distortion  in distributed systems. We have shown that the well-known reverse water-filling algorithm on the product of local covariance matrices is the optimal coding scheme for Gaussian variables and it is a lower bound for other distributions. We have also proposed a simple per-symbol coding scheme which operates near the optimal one. The third proposed method is based on a dimension reduction scheme that is similar to the PCA except its objective function is the inner-product distortion. It is also proved that the optimal basis vectors for dimension reduction is obtained from product of local covariance matrices. This is similar to the result of the reverse water-filling scheme where the product of local matrices play the key role in the procedure of encoding.
 
Through several experiments, we have observed that estimation error in GP regressions drops significantly by spending a few bits per each symbol resulting in near the full GP. We also experimentally show that by combining the sparse GP models and the proposed quantization methods, we can obtain a model outperforming PoE-based models even for very low bit rates. This suggests that optimizing inducing points in sparse GPs alongside their quantization will result in a better performance in distributed settings.





\ifCLASSOPTIONcaptionsoff
  \newpage
\fi



\bibliographystyle{IEEEtran}
\bibliography{IEEEabrv,./refs}
%

%

%

\begin{IEEEbiography}[{\includegraphics[width=1in,height=1.25in,clip,keepaspectratio]{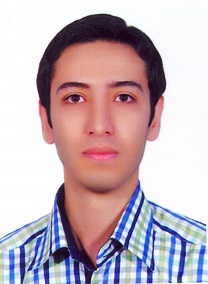}}]{Mostafa Tavassolipour}
received the B.Sc. degree from Shahed University, Tehran, Iran, in 2009, and the M.Sc. degree from Computer Engineering department of Sharif University of Technology (SUT), Tehran, Iran, in 2011. Currently, He is a Ph.D. student of Artificial Intelligence program at Computer Engineering Department of Sharif University of Technology. His research interests include machine learning, image processing, information theory, content based video analysis, and bioinformatics.
\end{IEEEbiography}

\begin{IEEEbiography}[{\includegraphics[width=1in,height=1.25in,clip,keepaspectratio]{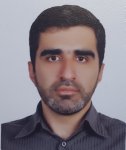}}]{Seyed Abolfazl Motahari}
 is an assistant professor at Computer Engineering Department of Sharif University of Technology (SUT). He received his B.Sc. degree from the Iran University of Science and Technology (IUST), Tehran, in 1999, the M.Sc. degree from Sharif University of Technology, Tehran, in 2001, and the Ph.D. degree from University of Waterloo, Waterloo, Canada, in 2009, all in electrical engineering. From August 2000 to August 2001, he was a Research Scientist with the Advanced Communication Science Research Laboratory, Iran Telecommunication Research Center (ITRC), Tehran. From October 2009 to September 2010, he was a Postdoctoral Fellow with the University of Waterloo, Waterloo. From September 2010 to July 2013, he was a Postdoctoral Fellow with the Department of Electrical Engineering and Computer Sciences, University of California at Berkeley. His research interests include multiuser information theory and Bioinformatics. He received several awards including Natural Science and Engineering Research Council of Canada (NSERC) Post-Doctoral Fellowship.
\end{IEEEbiography}

\begin{IEEEbiography}[{\includegraphics[width=1in,height=1.25in,clip,keepaspectratio]{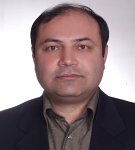}}]{Mohammad Taghi Manzuri Shalmani}
received the B.Sc. and M.Sc.
in electrical engineering from Sharif University of
Technology (SUT), Iran, in 1984 and 1988, respectively.
He received the Ph.D. degree in electrical and
computer engineering from the Vienna University
of Technology, Austria, in 1995. Currently, he is an
associate professor in the Computer Engineering
Department, Sharif University of Technology,
Tehran, Iran. His main research interests include
digital signal processing, stochastic modeling, and
Multi-resolution signal processing.
\end{IEEEbiography}







\end{document}